\newcommand{\probName}{\texttt{SOOTT}\xspace}
\newcommand{\algName}{\texttt{BEST}\xspace}
\newcommand{\informedGreedy}{\texttt{IGA}\xspace}
\newcommand{\greedy}{\texttt{PGA}\xspace}
\newcommand{\cort}{\texttt{CoRT}\xspace}
\newtheorem{assumption}{Assumption}
\newtheorem{theorem}{Theorem}[section]
\newtheorem{remark}{\textbf{Remark}}[section]
\newtheorem{lemma}[theorem]{Lemma}
\newtheorem{definition}[theorem]{Definition}
\newtheorem{proposition}[theorem]{Proposition}
\newcommand{\argmin}{\operatorname*{argmin}}
\title{Smoothed Online Optimization for Target Tracking: Robust and Learning-Augmented Algorithms}
\author{%
  Ali Zeynali\\
  University of Massachusetts Amherst\\
  \texttt{azeynali@cs.umass.edu} \\
  \And
  Mahsa Sahebdel \\
  University of Massachusetts Amherst\\
  \texttt{msahebdelala@umass.edu} \\
  \And
  Qingsong Liu \\
  University of Massachusetts Amherst\\
  \texttt{qingsongliu@umass.edu } \\
    \And
  Mohammad Hajiesmaili \\
  University of Massachusetts Amherst\\
  \texttt{hajiesmaili@cs.umass.edu} \\
  \And
  Ramesh~K. Sitaraman \\
  University of Massachusetts Amherst \& Akamai Technologies\\
  \texttt{ramesh@cs.umass.edu} \\
}
\begin{document}

\maketitle

\begin{abstract}
We introduce the \textit{Smoothed Online Optimization for Target Tracking} (\probName) problem, a new framework that integrates three key objectives in online decision-making under uncertainty: (1) \textit{tracking cost} for following a dynamically moving target, (2) \textit{adversarial perturbation cost} for withstanding unpredictable disturbances, and (3) \textit{switching cost} for penalizing abrupt changes in decisions. This formulation captures real-world scenarios such as elastic and inelastic workload scheduling in AI clusters, where operators must balance long-term service-level agreements (e.g., LLM training) against sudden demand spikes (e.g., real-time inference).
We first present \algName, a robust algorithm with provable competitive guarantees for \probName. To enhance practical performance, we introduce \cort, a learning-augmented variant that incorporates untrusted black-box predictions (e.g., from ML models) into its decision process. Our theoretical analysis shows that \cort strictly improves over \algName when predictions are accurate, while maintaining robustness under arbitrary prediction errors.
We validate our approach through a case study on workload scheduling, demonstrating that both algorithms effectively balance trajectory tracking, decision smoothness, and resilience to external disturbances.
\end{abstract}

\section{Introduction}
\label{sec:intro}
This paper introduces and studies the \textit{Smoothed Online Optimization for Target Tracking} problem (\probName), a new framework that captures three interacting objectives in online target tracking. At each round, an agent selects an action, evaluated based on the alignment of a windowed average of its recent actions with a dynamically moving target. The agent incurs three types of costs: (1) a tracking cost penalizing deviations between the agent’s time-averaged action and a desired but dynamically moving target, (2) an adversarial perturbation cost reflecting external disturbances that is unpredictable and arriving online, and (3) a switching cost that penalizes abrupt changes in the agent's decisions. Together, these components form a composite loss that challenges conventional online optimization techniques by introducing dependencies on both historical behavior and adversarial adjustments. Effective minimization of this loss requires algorithms capable of balancing smooth trajectory alignment, smoothness in decision-making, and resilience to adversarial disturbances.

A key motivational application for studying \probName\ arises from the need to efficiently manage the scheduling of \textit{elastic} workloads (e.g., AI training) and \textit{inelastic} workloads (e.g., AI inference) in large-scale cloud and AI clusters~\cite{berg2020optimal,Li_elastic_deepLearning}.
. In such environments, operators must simultaneously maintain target processing rates for long-running elastic jobs to meet the customer service-level agreements (SLAs) while accommodating unpredictable spikes in latency-sensitive inelastic workload. This dual demand requires dynamic, on-the-fly resource reallocation, where elastic jobs (e.g., LLM training, finance analysis, software maintenance, and upgrade) can be paused or throttled to prioritize inelastic jobs (e.g., real-time AI inference). At each decision epoch and \textit{before the realization of the inelastic demand}, operators must judiciously determine what fraction of resources to allocate to inelastic jobs, leaving the remainder for elastic ones. Over-allocating (allocating resources more than the realized demand) to inelastic jobs risks leaving resources idle and failing to meet the SLA of elastic jobs, while under-allocating can result in unserved inference requests~\cite{chatGPT_cap, chatGPT_cap2}.
Additionally, this flexibility in resource allocation between elastic and inelastic workload comes at a cost: frequent pause and resuming multi-hundred-gigabyte training workloads impose heavy checkpoint-and-restore penalties, making reckless preemption highly counterproductive~\cite{lechowicz2024online,lechowicz2023online}, and \probName captures this by adding the switching cost terms. This motivates our \probName framework, which captures these trade-offs explicitly: (1) the sliding window tracking term that models long-term SLA requirement for elastic jobs over time, (2) the adversarial perturbation term represents bursty or unpredictable demands in inelastic jobs that can only be observed \textit{after} resource allocation; and (3) the switching cost accounts for the overhead of frequent changes in resource (re)allocation.

Beyond the main case study of the elastic and inelastic workload scheduling, \probName is a general framework that is well-suited for broader range of applications, e.g., server maintenance scheduling~\cite{hu2008online,liaskos2012broadcast}, where consistent service requires regular interventions over a time window; image-based object tracking~\cite{cai2022memot,song2024moviechat,gao2023memotr}, where predictions must remain coherent across frames,  online control~\cite{zhang2022adversarial,zhao2023non} where system stability and performance depend on sequences of past inputs, and resource pooling in shared infrastructures, e.g., in multi-tenant cloud platforms and shared Electric Vehicle (EV) charging platforms~\cite{macdonell2023infrastructure}. In resource pooling of shared infrastructures, operators dynamically allocate shared resources—such as compute, bandwidth, or energy—among multiple users or applications with varying demand profiles and SLA requirements. The challenge is to maintain fair and efficient resource distribution in the presence of unpredictable and non-stochastic workload. Our model naturally captures the need for smooth adjustments while mitigating abrupt disruptions in the allocations.

On the theory front, our framework brings together two well‑studied strands of online optimization for target tracking that have so far evolved largely in parallel: (1) memory-based online tracking, where past actions influence current tracking cost~\cite{lin2021perturbation,lin2024online,zhang2022adversarial}; and (2) smoothed online optimization~\cite{anava2015online,shi2020online,zhang2022adversarial}  which penalizes abrupt changes in decision-making. We provide a comprehensive review of the related literature in the Appendix~\S\ref{sec:background} and highlight how existing algorithms fail to solve our problem holistically. Specifically, existing methods either neglect the memory-based dynamics introduced by the sliding window tracking term or significantly simplify them, or overlook the role of the smoothness component. In this paper, we develop algorithms for \probName under competitive worst-case analysis (i.e., without assuming any predictions of adversarial perturbations and dynamic target) and aim to develop algorithms that achieve a solid constant \textit{competitive ratio}, defined as the worst-case ratio between the cost of an online algorithm and the offline optimum~\cite{borodin1992optimal,manasse1988competitive}.

While worst-case guarantees offer robustness, they may be overly conservative or cautious. In recent years, learning-augmented online algorithms~\cite{lykouris2021competitive,purohit2018improving} have emerged to use potentially imperfect predictions to achieve two goals: performing near-optimally when the predictions are accurate (i.e., \emph{consistency}) and retaining worst-case guarantees when predictions are misleading (i.e., \emph{robustness}). 
These algorithms bridge the gap between worst-case guarantees and practical performance by incorporating untrusted predictions. However, applying this to our setting introduces unique challenges. Unlike classical online models where predictions are straightforward (e.g., demand or price forecasts), the interplay between sliding-window tracking, adversarial perturbation, and switching costs creates complex interdependencies. As a second goal of this paper, we aim to propose learning-augmented algorithms that effectively integrate machine-learned advice to enhance practical performance while retaining robustness against erroneous predictions.

\textbf{Main contributions.} We study the problem of smoothed online optimization for target tracking, denoted as \probName, where the objective is to minimize a cost function including three components: (1) tracking cost, (2) adversarial cost, and (3) switching cost. 
We provide both robust and learning-augmented algorithms for \probName and the key technical contributions are summarized below.

\emph{Competitive analysis.} We begin by presenting \informedGreedy, a semi-online algorithm that has access to the adversary’s exact target for the current time step, but not for future. Through a competitive analysis, we establish a constant upper bound on its competitive ratio. Building on this, we propose \algName, a fully online algorithm for \probName, and analyze its worst-case performance by bounding its cost relative to that of \informedGreedy. Furthermore, we demonstrate the tightness of our competitive guarantees by showing that it recovers the best-known results in relevant special cases~\cite{shi2020online,goel2019beyond}.

\emph{Learning-augmented analysis.} To improve performance beyond worst-case guarantees, we consider the learning-augmented setting. We begin with a natural baseline algorithm that fully trusts predictions; while it performs near-optimally with perfect predictions, it is fragile under adversarial noise and lacks robustness in such cases. To address the lack of robustness, we propose \cort, a robust learning-augmented algorithm that leverages predictions to improve over \algName when they are accurate, while still retaining competitive guarantees under worst-case conditions. Our analysis reveals a fundamental trade-off in \cort between its consistency and robustness, which can be tuned via a controllable algorithmic parameter.

\emph{Case study.} Using real-world traces from Google Cloud~\cite{GCD_dataset}, we empirically evaluate our algorithms through a case study on dynamic resource allocation for both elastic and inelastic workloads. Notably, our experiments demonstrate that the performance of the \cort algorithm closely approaches that of \informedGreedy—our proposed semi-online but impractical algorithm that assumes perfect knowledge of online inputs—while also maintaining robustness against arbitrarily inaccurate predictions.

\emph{Technical novelty.} 
Our analysis builds on two new ingredients: (1) we leverage the fact that the auxiliary objective $g_t(u)$(Lemma~\ref{lem:phi_chi_convexity}) is strongly convex, and together with a Lipschitz‑stability result for the windowed minimiser (Lemma \ref{lem:sum_diff_ys}), to achieve a \emph{two‑level contraction} that simultaneously reduces the prediction gap and the accumulated history error. (2) We develop a bespoke sliding‑window Cauchy–Schwarz lemma (Lemma \ref{lem:sum_chi}) to convert convoluted memory sums into point‑wise norms while preserving tight constants. These tools drive the tight bounds for \algName and the consistency–robustness guarantee of our learning‑augmented \cort algorithm.



\section{Problem Formulation}
\label{sec:ProbForm}

\noindent\textbf{Model and problem statement.}
We consider the problem of \emph{smoothed online optimization for target tracking} (\probName) where an agent chooses an action at each time step under an adversarial perturbation setting. At each time $t \in \mathbb{N}$, a trajectory target point $\tau_t \in \mathbb{R}^d$ and a time-varying adversarial perturbation function $f_t : \mathbb{R}^d \rightarrow \mathbb{R}_{\geq 0}$ are revealed to the agent. Meanwhile, the adversary selects a hidden target $u_t \in \mathbb{R}^d$, which is disclosed only after the agent has chosen action $x_t \in \mathbb{D} \subset \mathbb{R}^d$, where $\mathbb{D}$ is a compact set representing the domain of valid actions.  The agent then incurs the following cost:
\begin{equation} 
    \text{Cost}_t(x_t, h_t) = 
    \underbrace{ \left\| \frac{x_t + h_t}{w + 1} - \tau_t \right\|^2}_{\text{tracking cost}} + 
    \underbrace{\lambda_1 f_t(x_t - u_t)}_{\text{adversarial cost}} + 
    \underbrace{{\lambda_2} \|x_t - x_{t-1}\|^2}_{\text{switching cost}},
\end{equation}
where $h_t = \sum_{i=1}^{w} x_{t-i}$ represents the aggregation of the agent’s past $w$ actions, and $\lambda_1, \lambda_2 > 0$ are fixed weighting parameters. The goal of the
agent is to select actions that minimize the cumulative cost over $T$ time steps: $\sum^T_{t=1} \text{Cost}_t(x_t, h_t)$. 

This cost function captures three competing objectives. The first term penalizes deviations between the agent's time-averaged action over the current and past $w$ rounds and a desired trajectory target $\tau_t$, thereby encouraging tracking the moving target. The second term, $\lambda_1 f_t(x_t - u_t)$, reflects the adversarial influence and penalizes discrepancies between the agent’s action and the (hidden) target of the adversary, $u_t$, through a function $f_t$. The third term, ${\lambda_2} \|x_t - x_{t-1}\|^2$, imposes a regularization that discourages abrupt changes in the agent’s behavior over consecutive slots, promoting smoothness in the sequence of actions. The trade-offs between these objectives are governed by the parameters $\lambda_1$ and $\lambda_2$. 

To enable tractable analysis, we impose the following standard structural assumptions on the adversarial perturbation and initialization:

\begin{assumption}[Adversarial Minimum]
    \label{asmp:min_val_adv}
    The time-dependent adversarial perturbation function $f_t(\cdot)$ is non-negative and minimized at the origin without loss of generality, i.e., $\arg\min f_t(x) = \textbf{0}$. 
\end{assumption}

\begin{assumption}[Adversarial Convexity]
    \label{asmp:convexity}
    Function $f_t(\cdot)$ is $m$-strongly convex for some $m > 0$.
\end{assumption}

\begin{assumption}[Adversarial Smoothness]
    \label{asmp:smoothness}
    Function $f_t(\cdot)$ is $\ell$-smooth, meaning its gradient is $\ell$-Lipschitz continuous for some $\ell > 0$.
\end{assumption}

\begin{assumption}[Initial Convergence]
    \label{asmp:start_history}
    For all $t \leq 0$, the agent's actions and the adversary's targets are both initialized at the origin. Additionally, the online algorithm coincides with the offline optimal strategy over this initial period.
\end{assumption}

These assumptions are standard online optimization literature and allow for meaningful theoretical analysis~\cite{shi2020online,zhang2022smoothed,li2020leveraging,zhao2020dynamic}. Assumption~\ref{asmp:min_val_adv} ensures that the adversarial cost cannot reward the agent through negative values and is minimized when the agent exactly matches the target of the adversary. Assumptions~\ref{asmp:convexity} and~\ref{asmp:smoothness} impose structure to the adversarial perturbations, enabling gradient-based analysis. Finally, Assumption~\ref{asmp:start_history} provides a synchronized and consistent initialization for the online optimization process.

\noindent\textbf{Challenges.}
A major key challenge in \probName arises from the presence of memory, i.e., term $h_t$, and $x_{t-1}$, that includes the historical actions,  in the cost function, which introduces temporal coupling across decisions. Specifically, the agent's current cost depends not only on its present action but also on a window of past actions. Prior work~\cite{shi2020online,goel2019beyond,cha2024bandit} has demonstrated that, even in an idealized setting where the agent has full knowledge of the target of the adversary, $u_t$, before committing to an action, identifying the optimal decision remains nontrivial due to this memory dependency. Notably, when the influence of memory is limited—e.g., when the memory window $w$ and the smoothness regularization coefficient $\lambda_2$ are sufficiently small—the problem becomes effectively myopic. In such cases, a greedy strategy that minimizes the instantaneous cost can closely approximate the optimal offline policy which has a full knowledge of future input, i.e., $\{\tau_t,\ u_t\}_{t=1}^T$, and adversarial cost functions $\{f_t\}_{t=1}^T$.
The second challenge in \probName stems from the fact that the target of the adversary $u_t$ is revealed only after the agent has selected $x_t$. Thus, the adversarial cost term is not directly observable at the time of decision, which complicates the design of algorithms with guaranteed performance.

\noindent\textbf{Competitive analysis.}
Our goal is to design an online algorithm that guarantees a small competitive ratio~\cite{borodin1992optimal,manasse1988competitive} which guarantees performing near optimal offline algorithm. Formally, for an online algorithm $ALG$ and an input instance $\mathcal{I}$, the competitive ratio is defined as: $\texttt{CR}(ALG) = \sup_{\mathcal{I}} \ {\text{Cost}(ALG, \mathcal{I})}/{\text{Cost}(OPT, \mathcal{I})}$, 
where Cost($ALG, \mathcal{I}$), and  Cost($OPT, \mathcal{I}$) denote the cost of algorithm $ALG$ and offline optimum on instance $\mathcal{I}$. In addition, to further simplify the presentation of theoretical bounds, in this paper we use the \emph{degradation factor} (\texttt{DF}) metric, introduced in~\cite{zeynali2021data}, to bound the worst-case ratio between the performance of two online algorithms. Specifically, the degradation factor of algorithm $ALG_1$ relative to another algorithm $ALG_2$ is defined as: $\texttt{DF}(ALG_1, ALG_2) = \sup_{\mathcal{I}} \ {\text{Cost}(ALG_1, \mathcal{I})}/{\text{Cost}(ALG_2, \mathcal{I})},$
 which also implies an upper bound on the competitive ratio of $ALG_1$ in terms of that of $ALG_2$: $\texttt{CR}(ALG_1) \leq \texttt{DF}(ALG_1, ALG_2) \cdot \texttt{CR}(ALG_2).$ When $ALG_2$ is the optimal offline algorithm, the degradation factor coincides with the competitive ratio of $ALG_1$.

\vspace{-2mm}
\section{Robust Online Algorithms for \probName}
\label{sec:online_algorithms}
\vspace{-3mm}

In this section, we introduce \informedGreedy, a semi-online benchmark algorithm that relaxes the uncertainty of $u_t$ by assuming that the adversary's target $u_t$ is known at the current time step, but remains unknown for future time steps. Although this assumption is unrealistic in most practical scenarios, \informedGreedy plays an important analytical role, serving as a performance baseline against which we compare more practical algorithms that do not have access to this information. Then, in Section~\ref{sec:BEST}, we present \algName, a fully online algorithm that operates without knowledge of the adversary’s target and analyze its performance by bounding its degradation factor relative to \informedGreedy.

\subsection{\informedGreedy: A Semi-online Benchmark Algorithm with Exact Knowledge of $u_t$}
\label{sec:IGA}
This section introduces \underline{I}nformed \underline{G}reedy \underline{A}lgorithm (\informedGreedy), which known $u_t$ when taking its action. Given this additional information, \informedGreedy selects actions that greedily minimize the cost function at each time step. This setting captures an idealized scenario in which the adversary’s intention is perfectly predictable and the cost structure is fully known in advance. Although such assumptions may not hold in practice, the performance of \informedGreedy offers a meaningful baseline to assess the quality of practical online algorithms.


At each time step $t$, \informedGreedy observes the target of the adversary $u_t$ and chooses an action $x_t$ that minimizes the total cost over the current time step, balancing target tracking, adversarial deviation, and switching penalties. The pseudo-code of \informedGreedy is presented in Algorithm~\ref{alg:informed_greedy}.

\begin{algorithm}[H] 
    \label{alg:informed_greedy}
  \SetAlgoLined
  \KwData{ $\hat{x}_{t-w:t-1}$, $u_{t}$, $\tau_t$}
  \KwResult{$\hat{x}_t$: action of the \informedGreedy at time $t$} 

  $\hat{x}_{t} \leftarrow \argmin_{x} ||\frac{x + \hat{h}_{t}}{w+1} - \tau_{t}||^2 + {\lambda_1} \ f_{t}(x - u_{t}) +  {\lambda_2} ||x - \hat{x}_{t-1} ||^2$

 \textbf{Output:} $\hat{x}_{t}$
  \caption{The Informed Greedy Algorithm (\informedGreedy)}
\end{algorithm}

Since \informedGreedy has the full knowledge of the cost function at time step~$t$, it can select the action that minimizes the cost at that time step, given the current history~$h_t$. However, as it lacks foresight into future target values $\tau_t$ and target of the adversary $u_t$, its chosen actions may still diverge from those of the optimal offline solution. The following result establishes a performance guarantee for \informedGreedy in terms of its competitive ratio, evaluated against the cost incurred by the optimal offline strategy.

\begin{theorem}
\label{thm:CR_perfectPrediction}
If $2w^2< 2 + m \lambda_1 (w+1)^2$, the competitive ratio of \emph{\informedGreedy} is upper bounded by
\begin{equation}
    \emph{\texttt{CR}}(\emph{\informedGreedy} ) \leq 1 + \frac{2(\lambda_2\ (w+1)^2 + w^2)}{m \lambda_1 (w+1)^2 + 2 - 2w^2}.
\end{equation}
\end{theorem}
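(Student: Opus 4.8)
The plan is to establish a per-round inequality of the form
$\text{Cost}_t(\hat x_t,\hat h_t)\le (1+\gamma)\,\text{Cost}_t(x^*_t,h^*_t)+E_t-\tfrac{\mu}{2}\|\hat x_t-x^*_t\|^2$, where $x^*$ is the offline optimal trajectory, $\mu=\tfrac{2}{(w+1)^2}+m\lambda_1+2\lambda_2$ is the strong‑convexity modulus of the per‑step objective that \informedGreedy\ minimizes, and $E_t$ collects a few squared ``history‑error'' terms $\|\hat x_{t-i}-x^*_{t-i}\|^2$ coming from the sliding window and the switching term. Summing over $t=1,\dots,T$, these error terms and the strong‑convexity gains re‑index and cancel (Assumption~\ref{asmp:start_history} zeros out the pre‑horizon actions, so there are no leftover boundary terms), leaving $\text{Cost}(\informedGreedy)\le(1+\gamma)\,\text{Cost}(\mathrm{OPT})$; then I choose the smallest admissible $\gamma$, and that is exactly where the hypothesis $2w^2<2+m\lambda_1(w+1)^2$ and the claimed constant emerge.

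In more detail, write $\delta_t:=\hat x_t-x^*_t$ (so $\delta_t=\mathbf 0$ for $t\le 0$), $\hat h_t=\sum_{i=1}^w\hat x_{t-i}$, $h^*_t=\sum_{i=1}^w x^*_{t-i}$, and let $C_t(x):=\|\tfrac{x+\hat h_t}{w+1}-\tau_t\|^2+\lambda_1 f_t(x-u_t)+\lambda_2\|x-\hat x_{t-1}\|^2$ be the function minimized in Algorithm~\ref{alg:informed_greedy}, so that $C_t(\hat x_t)=\text{Cost}_t(\hat x_t,\hat h_t)$. Its Hessian is $\succeq\mu I$, since the three terms contribute $\tfrac{2}{(w+1)^2}I$, $\succeq m\lambda_1 I$ (Assumption~\ref{asmp:convexity}), and $2\lambda_2 I$; hence by $\mu$‑strong convexity together with the first‑order optimality of $\hat x_t$ on $\mathbb D$ (the linear term in the expansion is nonnegative at the feasible point $x^*_t$), $C_t(x^*_t)\ge C_t(\hat x_t)+\tfrac{\mu}{2}\|\delta_t\|^2=\text{Cost}_t(\hat x_t,\hat h_t)+\tfrac{\mu}{2}\|\delta_t\|^2$. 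It remains to bound $C_t(x^*_t)$, which uses \informedGreedy's history, in terms of $\text{Cost}_t(x^*_t,h^*_t)$, which uses OPT's history: in the tracking term, $\tfrac{x^*_t+\hat h_t}{w+1}-\tau_t=a+b$ with $a=\tfrac{x^*_t+h^*_t}{w+1}-\tau_t$ and $b=\tfrac{1}{w+1}\sum_{i=1}^w\delta_{t-i}$; in the switching term, $x^*_t-\hat x_{t-1}=(x^*_t-x^*_{t-1})-\delta_{t-1}$. Applying Young's inequality $\|u+v\|^2\le(1+\gamma)\|u\|^2+(1+\tfrac1\gamma)\|v\|^2$ to both, bounding $\|b\|^2\le\tfrac{w}{(w+1)^2}\sum_{i=1}^w\|\delta_{t-i}\|^2$ by Cauchy–Schwarz on the window, and using $f_t\ge 0$ (Assumption~\ref{asmp:min_val_adv}) to fold the adversarial term into the $(1+\gamma)$ bucket gives $C_t(x^*_t)\le(1+\gamma)\text{Cost}_t(x^*_t,h^*_t)+(1+\tfrac1\gamma)\big(\tfrac{w}{(w+1)^2}\sum_{i=1}^w\|\delta_{t-i}\|^2+\lambda_2\|\delta_{t-1}\|^2\big)$.

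Combining the two displays and summing over $t$, each $\|\delta_s\|^2$ is charged in the window sum for at most $w$ rounds and in the $\|\delta_{t-1}\|^2$ slot for at most one round, so the total error is at most $\big(\tfrac{w^2}{(w+1)^2}+\lambda_2\big)\sum_t\|\delta_t\|^2$, and therefore $\text{Cost}(\informedGreedy)\le(1+\gamma)\text{Cost}(\mathrm{OPT})+\big[(1+\tfrac1\gamma)(\tfrac{w^2}{(w+1)^2}+\lambda_2)-\tfrac{\mu}{2}\big]\sum_t\|\delta_t\|^2$. Picking $\gamma$ so the bracket vanishes gives $\gamma=\big(\tfrac{w^2}{(w+1)^2}+\lambda_2\big)\big/\big(\tfrac{1-w^2}{(w+1)^2}+\tfrac{m\lambda_1}{2}\big)=\tfrac{2(\lambda_2(w+1)^2+w^2)}{m\lambda_1(w+1)^2+2-2w^2}$, which is positive precisely when $m\lambda_1(w+1)^2+2-2w^2>0$, i.e.\ $2w^2<2+m\lambda_1(w+1)^2$; since any $\gamma\ge$ this value makes the bracket nonpositive, $\texttt{CR}(\informedGreedy)\le 1+\gamma$, the claimed bound.

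I expect the main obstacle to be bookkeeping rather than any single hard inequality: one must (i) use a single Young parameter $\gamma$ for the tracking and switching cross‑terms so all ``OPT‑cost'' contributions collapse into one $(1+\gamma)$ factor rather than a maximum of two; (ii) count correctly how many rounds each $\|\delta_s\|^2$ is charged in the re‑indexing of the window sums (the factor $w$, and the $\tfrac{1}{(w+1)^2}$ from the averaging, combine to $\tfrac{w^2}{(w+1)^2}$); and (iii) absorb the first $w$ rounds cleanly via Assumption~\ref{asmp:start_history}. A secondary subtlety is the strong‑convexity step on a non‑trivial feasible set $\mathbb D$, which goes through because first‑order optimality of $\hat x_t$ makes the linear term of the strong‑convexity expansion nonnegative at every feasible $x^*_t$ when $\mathbb D$ is convex (otherwise one argues that $\hat x_t$ lies in the interior). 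Equivalently, steps (i)–(iii) can be organized via a potential $\Phi_t=\sum_{i=0}^{w-1}c_i\|\delta_{t-i}\|^2$ with suitably decreasing $c_i$, but the direct global summation above is cleaner and exhibits the optimal $\gamma$ without slack.
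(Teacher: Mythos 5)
Your proof is correct and follows essentially the same route as the paper's: strong convexity of \informedGreedy's per-step objective with modulus $\eta=\tfrac{2}{(w+1)^2}+m\lambda_1+2\lambda_2$, AM--GM/Young on the tracking and switching cross-terms, a Jensen/Cauchy--Schwarz bound on the sliding-window history error (the paper's Lemma~\ref{lem:sum_chi}), reindexing of the $\|\delta_s\|^2$ charges across rounds, and absorbing boundary terms via Assumption~\ref{asmp:start_history}. The one cosmetic difference is that you use a single Young parameter $\gamma$ for both cross-terms, whereas the paper carries two ($\alpha,\beta$) and then optimizes, which forces the balanced choice $\alpha=\lambda_2\beta$ and collapses to exactly your $\gamma=1/\beta$; your version reaches the same optimal constant one step sooner, and you are also slightly more careful than the paper in noting that on a convex feasible set $\mathbb D$ the strong-convexity lower bound goes through via first-order optimality rather than a vanishing gradient.
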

\vspace{-2mm}
The proof of Theorem~\ref{thm:CR_perfectPrediction} is given in Appendix~\S\ref{app:CR_perfectPrediction_proof}. When both $\lambda_1$ and $\lambda_2$ are large, the setting reduces to standard smoothed online convex optimization~\cite{goel2019beyond, shi2020online, li2023robustified}, and the competitive ratio of \informedGreedy converges to $1 + \frac{2\lambda_2}{m\lambda_1}$, consistent with results in the literature~\cite{shi2020online,goel2019beyond}. When $w > 0$, the optimal offline algorithm considers future consequences of current actions, while \informedGreedy makes locally optimal decisions using perfect knowledge of $u_t$. In such cases, when $\lambda_1$ is small or $f_t$ is weakly convex, the impact of adversarial cost is diminished, and \informedGreedy may perform arbitrarily worse than the offline optimum—hence the necessity of the condition in Theorem~\ref{thm:CR_perfectPrediction} to ensure bounded competitive ratio.

In the remainder of the paper, we develop online algorithms without perfect information of $u_t$ and assess their performance using the \textit{degradation factor} metric with respect to \informedGreedy. This allows us to derive meaningful performance guarantees relative to the offline optimum by combining the bounds on the {degradation factor} with the result of Theorem~\ref{thm:CR_perfectPrediction}.

\subsection{\algName: A Robust Algorithm for \probName}
\label{sec:BEST}

We present \underline{B}ackward \underline{E}valuation for \underline{S}equential \underline{T}argeting (\algName), an online algorithm for \probName that does not require any knowledge of $u_t$ in the current and future time step. Since online algorithms lack exact information about the adversary’s target, a naive greedy approach (which is also blind to $u_t$) that minimizes the cost at each time step independently can diverge significantly from the behavior of the \informedGreedy, leading to substantially higher cumulative costs. Our algorithm is designed to keep its actions close to the actions of the \informedGreedy by considering the history of \informedGreedy during its action selection process. \algName ignores the adversarial cost term in its own historical actions, and selects its action based on the history of \informedGreedy. Note that the history of \informedGreedy is accessible to \algName since, after observing the target of the adversary in each time step, one could exactly calculate the corresponding action of \informedGreedy.
We present the pseudo-code of \algName in Algorithm~\ref{alg:worst_opt}.

\begin{algorithm}[H] 
    \label{alg:worst_opt}
  \SetAlgoLined

  \KwData{ $u_{t-1}$, $\tau_t$, $\hat{x}_{t-w-1:t-2}:$ history of actions taken by \informedGreedy}
  \KwResult{$x_t$: action of the agent at time $t$} 

  $\hat{x}_{t-1} \leftarrow \text{action of } \informedGreedy \text{ at time step } (t-1)$

    
  $x_t \quad \leftarrow  \argmin_{x}  ||\frac{x + \hat{h}_{t}}{w+1} - \tau_{t}||^2 + {\lambda_2} ||x - \hat{x}_{t-1}||^2$

 \textbf{Output:} $x_t$
  \caption{Backward Evaluation for Sequential Targeting (\algName)}
\end{algorithm}

In Line 1, \algName finds the action of \informedGreedy in the previous time step, $\hat{x}_{t-1}$, since the most recent target of the adversary has already been revealed. It keeps track of the history of action taken by \informedGreedy and evaluates $\hat{x}_{t-1}$, and $\hat{h}_t$ based on \informedGreedy's past actions. Next, in Line 2, \algName observes the current trajectory target $\tau_t$ and selects its action by ignoring the adversarial cost term and assuming its history matches that of \informedGreedy. Note that if the target of the adversary at time step $t$, is different from $x_t$, the action of \algName, $x_t$, and the action of \informedGreedy, $\hat{x}_t$ will be different. Due to this difference, the cost value incurred by \informedGreedy at time step $t$ would be proportional to $||x_t - \hat{x}_t||^2$ since all terms in the cost function are convex and smooth. This insight shows how  \algName keeps its cost values close to the cost of \informedGreedy algorithm which formally analyzed in the following Theorem. The following Theorem shows that \algName achieves a bounded degradation factor with respect to \informedGreedy proving its worst-case performance guarantee when combined with the result of Theorem~\ref{thm:CR_perfectPrediction}. 

\begin{theorem}
\label{thm:bestAlg_CR}
The degradation factor of \emph{\algName} with respect to \emph{\informedGreedy} is bounded as follows:
\begin{align}
    \emph{\texttt{DF}}(\emph{\algName}, \emph{\informedGreedy}) \leq   1 + \frac{\ell}{m}\cdot \frac{\eta^2 + 2\lambda_1 \ell (1+\lambda_2)}{\eta (\eta - m\lambda_1)}.
\end{align}
where $\eta = 2/(w+1)^2 + m\ \lambda_1 + 2\lambda_2$.
\end{theorem}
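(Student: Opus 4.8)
The plan is to bound the per-step cost of \algName in terms of the per-step cost of \informedGreedy, with the discrepancy controlled by the distance $\|x_t - \hat x_t\|$ between the two algorithms' actions, and then to control the accumulation of these discrepancies over time. First I would fix a time step $t$ and compare the cost \informedGreedy would pay at $t$ against what \algName pays at $t$. Since both algorithms share the same history $\hat h_t$ entering step $t$ (by construction \algName tracks \informedGreedy's past actions), the only differences in $\text{Cost}_t$ are: (i) \algName omits the adversarial term $\lambda_1 f_t(x - u_t)$ in its optimization, and (ii) \algName's realized action $x_t$ differs from $\hat x_t$. Using $\ell$-smoothness of $f_t$ and the quadratic form of the tracking and switching terms, I would show $\text{Cost}_t(x_t,\hat h_t) \le \text{Cost}_t(\hat x_t,\hat h_t) + C\,\|x_t - \hat x_t\|^2$ for an explicit constant $C$ depending on $\ell,\lambda_1,\lambda_2,w$ — this is the sense in which "the cost value incurred by \informedGreedy... would be proportional to $\|x_t-\hat x_t\|^2$" from the discussion preceding the theorem.

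Next I would quantify $\|x_t - \hat x_t\|$. Both $x_t$ and $\hat x_t$ are minimizers of strongly convex objectives over the (same) history $\hat h_t$: $\hat x_t$ minimizes the full $\text{Cost}_t$, while $x_t$ minimizes the same objective with the $\lambda_1 f_t(\cdot - u_t)$ term deleted. The objective that \algName minimizes is $\eta/2$-strongly convex in a suitable sense — indeed $\eta = 2/(w+1)^2 + m\lambda_1 + 2\lambda_2$ is exactly the strong-convexity modulus one gets by adding the tracking Hessian $2/(w+1)^2$, the switching Hessian $2\lambda_2$, and (for \informedGreedy) the adversarial contribution $m\lambda_1$. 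By the standard "perturbation of the minimizer of a strongly convex function" argument — comparing first-order optimality conditions at $x_t$ and $\hat x_t$ and using $m$-strong convexity / $\ell$-smoothness of $f_t$ together with Assumption \ref{asmp:min_val_adv} — I would derive a bound of the shape $\|x_t - \hat x_t\| \le \frac{\lambda_1 \ell}{\eta - m\lambda_1}\,(\text{something involving } \|x_t - u_t\| \text{ or a surrogate})$, so that $\|x_t - \hat x_t\|^2$ is controlled by a multiple of the adversarial cost $\lambda_1 f_t(x_t - u_t)$, which by $m$-strong convexity dominates $\tfrac{m}{2}\lambda_1\|x_t-u_t\|^2$. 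This is where the factor $\frac{\ell}{m}$ and the denominator $\eta(\eta - m\lambda_1)$ enter, and it is also where the condition implicit in the bound ($\eta > m\lambda_1$, which always holds here since $\eta = 2/(w+1)^2 + m\lambda_1 + 2\lambda_2 > m\lambda_1$) is used.

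Then I would sum over $t$. The subtlety — and the main obstacle — is that the discrepancies are not independent across steps: a deviation at step $t$ pollutes the history $\hat h_s$ for $s \in \{t+1,\dots,t+w\}$, so a naive sum would accumulate a factor of $w$ or worse. To handle this I would invoke the sliding-window Cauchy–Schwarz lemma (Lemma \ref{lem:sum_chi}) advertised in the technical-novelty paragraph, which converts the convoluted memory sums $\sum_t \|\sum_{i} (x_{t-i} - \hat x_{t-i})\|^2$ into point-wise sums $\sum_t \|x_t - \hat x_t\|^2$ with tight constants, together with the Lipschitz-stability estimate for the windowed minimizer (Lemma \ref{lem:sum_diff_ys}). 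Combining: $\sum_t \text{Cost}_t(x_t,\hat h_t) \le \sum_t \text{Cost}_t(\hat x_t,\hat h_t) + C\sum_t\|x_t-\hat x_t\|^2 \le \sum_t \text{Cost}_t(\hat x_t,\hat h_t) + C'\sum_t \lambda_1 f_t(x_t-u_t) \le (1 + C'')\,\text{Cost}(\informedGreedy)$, where bookkeeping the constants $C,C',C''$ through the strong-convexity/smoothness estimates yields exactly $1 + \frac{\ell}{m}\cdot\frac{\eta^2 + 2\lambda_1\ell(1+\lambda_2)}{\eta(\eta - m\lambda_1)}$. The delicate points I expect to spend the most care on are (a) making sure the "cost of \algName at $t$" is compared against \informedGreedy's cost \emph{at the same history}, so that the only gap is the $\|x_t-\hat x_t\|^2$ term, and (b) getting the constants in the sliding-window summation tight enough that the $w$-dependence collapses into the clean $\eta = 2/(w+1)^2 + m\lambda_1 + 2\lambda_2$ appearing in the statement rather than a loose $O(w)$ blow-up.
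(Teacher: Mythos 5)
Your high-level architecture matches the paper's — compare per-step costs, control the discrepancy via action deviations, and collapse the accumulated history error through Lemma~\ref{lem:sum_chi} and Lemma~\ref{lem:sum_diff_ys} — but the proposal is missing the central structural ingredient of the actual proof, and because of that the chain of inequalities you write down does not close.

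The missing piece is the auxiliary function $g_t(u)$ from Lemma~\ref{lem:phi_chi_convexity}: the per-step cost of \informedGreedy \emph{viewed as a function of the adversary's target} $u$, given \informedGreedy's history. The paper's proof hinges on two facts about $g_t$: that it is $\eta_2$-strongly convex with $\eta_2 = m\lambda_1(1 - m\lambda_1/\eta)$, and that its minimizer is exactly $u = x_t$ (the action \algName chooses, because \algName's objective is precisely $g_t$ evaluated at a target equal to its own output and $f_t$ is minimized at the origin). This yields the inequality
\begin{align*}
g_t(x_t) + \tfrac{\eta_2}{2}\,\|u_t - x_t\|^2 \;\leq\; g_t(u_t) = \text{Cost}_t(\informedGreedy),
\end{align*}
which produces a \emph{negative} slack term $-\tfrac{\eta_2}{2}\sum_t \|u_t-x_t\|^2$ that the paper then uses to exactly absorb the smoothness-induced error terms by tuning $\delta_0,\delta_1,\delta_2$. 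Your proposal has nothing that plays this role. Your last inequality, $\sum_t \lambda_1 f_t(x_t - u_t) \leq C''\,\text{Cost}(\informedGreedy)$, is the step that would need it, and it is not established: $\lambda_1 f_t(x_t - u_t)$ is \algName's adversarial cost, not \informedGreedy's, and there is no a priori reason it is dominated by the latter. One could try to bootstrap via $\|u_t - x_t\| \lesssim \|u_t - \hat x_t\|$ and then strong convexity of $f_t$ at $\hat x_t - u_t$, but that inversion only holds under an extra condition of the form $\lambda_1\ell < \eta$ which the theorem does not assume; or one could set up a recursion $\text{Cost}(\algName) \leq \text{Cost}(\informedGreedy) + c\,\text{Cost}(\algName)$, but then the resulting constant has the wrong multiplicative form and needs $c<1$ to be justified. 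Neither recovers the stated bound; the $\eta(\eta-m\lambda_1)$ in the denominator is exactly $\eta^2\eta_2/(m\lambda_1)$, i.e.\ it is $\eta_2$ in disguise, which is a telltale sign that Lemma~\ref{lem:phi_chi_convexity} is load-bearing.

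A secondary, more cosmetic issue: you write that "both algorithms share the same history $\hat h_t$." That is true of the \emph{optimization subproblem} \algName solves at step $t$, but \algName's realized cost is $\text{Cost}_t(x_t, h_t)$ with $h_t = \sum_{i=1}^w x_{t-i}$ built from its own past actions, not $\hat h_t$. The paper spends the smoothness steps (eq.~\eqref{eq:noisy_smoothness_2} and~\eqref{eq:noisy_smoothness_3}) precisely on converting $h_t \to \hat h_t$ and $x_{t-1} \to \hat x_{t-1}$, and then controls $\|\hat h_t - h_t\|^2$ via Lemmas~\ref{lem:sum_chi} and~\ref{lem:sum_diff_ys}. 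You do flag this as a delicate point to watch, and you do invoke the right lemmas for it, so you have the right instinct — but your displayed chain of inequalities starts from $\sum_t \text{Cost}_t(x_t,\hat h_t)$, which is not $\text{Cost}(\algName)$, and the conversion needs to be explicit. In short: fix the history bookkeeping, and more importantly, introduce the $g_t$-strong-convexity argument (Lemma~\ref{lem:phi_chi_convexity}) to obtain the negative slack; without it, the constants cannot be made to cancel and the claimed bound does not follow.
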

The complete proof of Theorem~\ref{thm:bestAlg_CR} is provided in Appendix~\S\ref{app:bestAlg_CR_proof}. As a sketch of the proof, we define an auxiliary function $g_t(u)$ that represents the cost incurred by \informedGreedy\ at time step $t$, assuming the adversary's target is $u$. We show that $g_t(u)$ is strongly convex, with its unique minimizer corresponding to the action selected by \algName. This structural property allows us to bound the cost difference between \algName\ and \informedGreedy\ in terms of the cost of \informedGreedy\ and problem-specific constants. Then, we choose the hyperparameters introduced in the analysis, to ensure that the additive constant term in the bound is negative, which guarantees that \algName\ achieves a constant degradation factor.


\begin{remark}
\label{rem:best_linear_lambda}
The degradation factor of \emph{\algName} relative to \emph{\informedGreedy} grows at most as $\mathcal{O}(m \lambda_1)$ with respect to $\lambda_1$ and $m$. This is intuitive, as increasing either parameter linearly amplifies the influence of the adversarial cost term in its objective. Since \emph{\algName} does not account for this adversarial term in its action selection policy, its performance gap relative to \emph{\informedGreedy} increases linearly with $\lambda_1$ and $m$.
\end{remark}



\vspace{-2mm}
\section{Learning-Augmented Algorithms for \probName}
\vspace{-2mm}

Learning-augmented online algorithms incorporate machine-learned predictions of future inputs to enhance classical online decision-making~\cite{lykouris2021competitive,purohit2018improving}. In \probName, the algorithm receives a prediction of the adversary's target for the upcoming time step and integrates this estimate into its action selection strategy. While accurate predictions can significantly improve performance, blindly trusting erroneous predictions may lead to degraded outcomes, especially under high noise. To account for this, the performance of learning-augmented algorithms is typically evaluated using two complementary metrics: \emph{consistency}, which captures performance under accurate predictions, and \emph{robustness}, which measures resilience against arbitrary prediction errors. Achieving both objectives simultaneously is challenging, as improving consistency often comes at the expense of robustness, necessitating careful algorithmic design to manage this trade-off \cite{purohit2018improving}.

Let $\hat{u}_t$ denote the prediction of the adversary's target for time step $t$. As discussed in Section~\ref{sec:online_algorithms}, we use \informedGreedy as a baseline to evaluate the performance of online algorithms. Based on this, we define the notions of consistency and robustness for the \probName setting as follows:

\begin{definition}
A learning-augmented algorithm for \emph{\probName} is $\alpha$-consistent if its degradation factor relative to \emph{\informedGreedy} is at most $\alpha$ under perfect predictions, i.e., when $\hat{u}_t = u_t$ for all $t$.
\end{definition}

\begin{definition}
A learning-augmented algorithm for \emph{\probName} is said to be $\beta$-robust if its degradation factor relative to \emph{\informedGreedy} is at most $\beta$ for any predicted sequence $\{\hat{u}_t\}_{t=1}^T$.
\end{definition}

In the following sections, we demonstrate that a greedy algorithm which selects actions to minimize the immediate cost achieves optimal consistency but lacks robustness. To overcome this limitation, we introduce a learning-augmented algorithm that incorporates a tunable parameter~$\theta$, allowing us to control the trade-off between consistency and robustness.

\vspace{-2mm}
\subsection{\greedy: An Algorithm with Full Trust on the Prediction}
In this section, we present the  \underline{P}rediction-based \underline{G}reedy \underline{A}lgorithm (\greedy), which greedily finds the action that is predicted to minimize the cost value during time step $t$. Given the predicted adversary's target $\hat{u}_t$ at time step $t$, \greedy fully trusts the prediction and chooses the action that leads to the lowest cost for the current time step.
The detail of \greedy is provided in Algorithm~\ref{alg:greedy_pred}. 

\begin{algorithm}[H] 
    \label{alg:greedy_pred}
  \SetAlgoLined

  \KwData{ $\tilde{x}_{t-w:t-1}$, $\hat{u}_{t}$, $\tau_t$}
  \KwResult{$\tilde{x}_t$: action of the agent at time $t$} 

  $\tilde{x}_{t} \leftarrow \argmin_{x} ||\frac{x + \tilde{h}_{t}}{w+1} - \tau_{t}||^2 + {\lambda_1} \cdot f_{t}(x - \hat{u}_{t}) +  {\lambda_2} ||x - \tilde{x}_{t-1} ||^2$

 \textbf{Output:} $\tilde{x}_t$
  \caption{Prediction-based Greedy Algorithm (\greedy)}
\end{algorithm}

Since \greedy selects its actions by fully trusting the predicted adversary's target, its performance is highly sensitive to prediction errors. When the prediction is perfect, \greedy takes the same actions as \informedGreedy, thereby achieving optimal consistency. However, with prediction errors, the cost incurred by \greedy can deviate significantly from that of the optimal offline solution. The following theorem provides a lower bound on the degradation factor of \greedy as a function of the prediction error in $\hat{u}_t$.



\begin{theorem}
\label{thm:greedy_predicted}
The degradation factor of \emph{\greedy} with respect to \emph{\informedGreedy} is lower bounded as follows:
\begin{align*}
    \emph{\texttt{DF}}(\emph{\greedy}, \emph{\informedGreedy}) \geq \frac{m}{2\ \max_t f_t(\textbf{0})} \sum_{t=1}^T \frac{\|u_t - \hat{u}_t\|^2}{T}.
\end{align*}
\end{theorem}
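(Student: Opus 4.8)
The plan is to prove this lower bound by exhibiting a single adversarial instance on which the ratio $\text{Cost}(\greedy)/\text{Cost}(\informedGreedy)$ already attains the stated quantity; since $\texttt{DF}(\greedy,\informedGreedy)$ is a supremum over instances, this suffices. I would fix the trajectory target and the true adversary target to the origin, $\tau_t=\textbf{0}$ and $u_t=\textbf{0}$ for all $t$ (consistent with Assumption~\ref{asmp:start_history}), and take each $f_t$ to be a shifted quadratic $f_t(y)=\tfrac m2\|y\|^2+c_t$ with $c_t\ge 0$; this satisfies Assumptions~\ref{asmp:min_val_adv}--\ref{asmp:smoothness} (minimizer at $\textbf{0}$, $m$-strongly convex, gradient $m$-Lipschitz hence $\ell$-Lipschitz), and leaves the offsets $c_t$, with $\max_t c_t=\max_t f_t(\textbf{0})$, as free design parameters. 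Finally I would feed \greedy an arbitrary prediction sequence $\{\hat u_t\}$, so that the realized error at step $t$ is exactly $\|u_t-\hat u_t\|=\|\hat u_t\|$.

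Next I would pin down \informedGreedy on this instance. Starting from the all-zero history of Assumption~\ref{asmp:start_history}, an induction shows $\hat x_t=\textbf{0}$ for every $t$: with $\hat h_t=\textbf{0}$, $\hat x_{t-1}=\textbf{0}$, $\tau_t=u_t=\textbf{0}$, the per-round objective $\|\tfrac{x}{w+1}\|^2+\lambda_1 f_t(x)+\lambda_2\|x\|^2$ is a sum of convex terms each minimized at $x=\textbf{0}$. Hence $\text{Cost}(\informedGreedy)=\sum_{t=1}^T\lambda_1 f_t(\textbf{0})=\lambda_1\sum_t c_t\le \lambda_1 T\max_t f_t(\textbf{0})$. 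For \greedy, the point is that it is misled: minimizing with the wrong target $\hat u_t$, its action $\tilde x_t$ is pulled away from the origin, and solving the first-order condition shows $\tilde x_t$ is a fixed nonzero shrinkage of $\hat u_t$ (with the scalar determined by $w,\lambda_1,\lambda_2,m$; taking $\hat u_t$ eventually constant makes \greedy's own trajectory converge to this value). I would then discard all but the adversarial term in \greedy's incurred cost and invoke $m$-strong convexity together with $\nabla f_t(\textbf{0})=\textbf{0}$, namely $\text{Cost}(\greedy)\ge \lambda_1\sum_t f_t(\tilde x_t-u_t)=\lambda_1\sum_t f_t(\tilde x_t)\ge \lambda_1\sum_t\big(f_t(\textbf{0})+\tfrac m2\|\tilde x_t\|^2\big)$.

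Dividing the two estimates gives $\text{Cost}(\greedy)/\text{Cost}(\informedGreedy)\ge 1+\tfrac{m\sum_t\|\tilde x_t\|^2}{2T\max_t f_t(\textbf{0})}$, and choosing the offsets $c_t$ large enough relative to the error magnitude — permissible because the $f_t$ are ours to design — lets the ``$1+$'' be absorbed and replaces $\|\tilde x_t\|$ by $\|\hat u_t\|=\|u_t-\hat u_t\|$, yielding the claimed $\tfrac{m}{2\max_t f_t(\textbf{0})}\cdot\tfrac1T\sum_t\|u_t-\hat u_t\|^2$ and hence the lower bound on $\texttt{DF}(\greedy,\informedGreedy)$.

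The main obstacle is precisely that last reduction: controlling \greedy's realized action $\tilde x_t$. Because the tracking and switching terms both pull toward the origin, one cannot simply assert $\tilde x_t=\hat u_t$; the cleanest remedies are either to choose $\tau_t$ adaptively (using \greedy's already-revealed history) so as to cancel those pulls — at the cost of re-bounding \informedGreedy's cost on the perturbed $\tau$-sequence — or to carry the explicit shrinkage constant through and offset it against the free constants $c_t$ in $f_t$. Lining these constants up so that the final bound is exactly the stated expression is the only delicate part; everything else is the inductive evaluation of \informedGreedy and a one-line strong-convexity estimate for \greedy.
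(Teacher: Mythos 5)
Your construction is essentially a shifted version of the one the paper uses — the paper fixes $u_t=u_0\neq\mathbf 0$, places $\tau_t=u_0+e_{\min}u_0/\|u_0\|$ and $\hat u_t=u_0+e_t\,u_0/\|u_0\|$ on a common ray, lower-bounds $\greedy$'s cost through strong convexity of $f_t$, upper-bounds $\informedGreedy$'s cost by $\lambda_1\sum_t f_t(\tau_t-u_t)$, and then sends $e_{\min}\to 0$; you instead set $u_t=\tau_t=\mathbf 0$ and take $f_t$ to be an explicit shifted quadratic. Both yield, at the origin limit, a ratio of the form $1+\rho^2 B$ with the target $B=\tfrac{m}{2T\max_t f_t(\mathbf 0)}\sum_t\|u_t-\hat u_t\|^2$ and a shrinkage factor $\rho<1$ coming from the tracking and switching terms pulling $\tilde x_t$ back toward $\tau_t$. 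So the route and the obstacle are the same.

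The genuine gap is in how you propose to close it. You suggest ``choosing the offsets $c_t$ large enough'' so the $1+\cdots$ absorbs the loss of the $\rho^2$ factor. But raising $c_t=\max_t f_t(\mathbf 0)$ \emph{shrinks} $B$: the inequality $1+\rho^2 B\ge B$ is equivalent to $B\le 1/(1-\rho^2)$, which is exactly what ``$c_t$ large'' enforces. In that regime the claimed lower bound is a constant below $1/(1-\rho^2)$ and is implied by $\texttt{DF}\ge 1$; you have proved a vacuous statement, not the theorem. The interesting regime is the opposite one — small $\max_t f_t(\mathbf 0)$ and/or large prediction errors, i.e.\ large $B$ (this is precisely the scenario Remark~\ref{rem:pga_robustness} appeals to) — and there your argument stops short by the factor $\rho^2$. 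Your secondary suggestion, choosing $\tau_t$ adaptively to cancel the back-pull, is the right instinct and is closer in spirit to what the paper does (collinearity of $\tau_t$, $u_t$, $\hat u_t$ means the tracking minimizer already lies in the direction of the prediction error, so there is no wasted pull toward $u_t$), but it is left undeveloped in your writeup; you would need to carry it through, track the resulting shrinkage constant explicitly, and show it disappears in the limit used to obtain the clean bound.
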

\vspace{-2mm}
The full proof of Theorem~\ref{thm:greedy_predicted} is provided in Appendix~\S\ref{app:greedy_predicted_lower_proof}.

\begin{remark}
\label{rem:pga_robustness}
Since $\max_t f_t(\mathbf{0})$ can be arbitrarily close to zero, and the prediction error $\|u_t - \hat{u}_t\|$ is unbounded, the cost of \emph{\greedy} can become arbitrarily large relative to \emph{\informedGreedy} in the worst case. This demonstrates that \emph{\greedy} lacks robustness when faced with inaccurate predictions.
\end{remark}

Motivated by the lack of robustness in \greedy, in what follows, we aim to design a learning-augmented algorithm that not only enhances the performance of \algName under perfect predictions but also maintains provable robustness guarantees under noisy or adversarial prediction errors. 

\subsection{\cort: A Consistent and Robust Learning-Augmented Algorithm for \probName}

We propose the \underline{Co}nsistent and \underline{R}obust \underline{T}racking algorithm (\cort), which incorporates predictions of the adversary's target $\hat{u}_t$ while providing provable robustness guarantees (see Algorithm~\ref{alg:laga} for the pseudo-code). Like \algName, \cort selects actions using the history of \informedGreedy. However, it accounts for the adversarial cost term by estimating it through a controlled target $\tilde{u}_t$, computed from $\hat{u}_t$ and constrained to lie within a distance of at most $\theta D_t$ from \algName's action (Lines 2–5). Here, $\theta$ is a tunable algorithm parameter, and $D_t$ is a dynamically adjusted bound. The algorithm initializes with $D_1 = 0$ and updates $D_t$ based on its previous value, the deviation between $u_t$ and \algName’s action, and the discrepancy between that action and $\tilde{u}_t$ (Line 7). Intuitively, \cort adapts $D_t$ to reflect the observed deviation of the actual adversary's target from \algName’s action, thereby bounding the cumulative deviation of $\tilde{u}_t$ from \algName’s action.  See Figure~\ref{fig:cort_example} for an illustration. In the limiting case, \cort recovers \algName as $\theta \to 0$.

\begin{figure}[ht]
\centering
\begin{minipage}[t]{0.69\textwidth}
  \vspace{0pt}
  \begin{algorithm}[H]
    \SetAlgoLined
    \KwData{$\hat{u}_{t}$, $\tau_t$, $D_t$, parameter $\theta$, $\hat{x}_{t-w-1:t-2}$: history of actions taken by \informedGreedy}
    \KwResult{$\tilde{x}_t$: action of the agent at time $t$}

    $x_t \leftarrow$ action of \algName at time $t$

    $\tilde{u}_t \leftarrow \hat{u}_t$

    \If{$||\hat{u}_t - x_t|| \geq \theta D_t$}
    { $\tilde{u}_t \leftarrow x_t + \theta D_t \cdot \frac{(\hat{u}_t - x_t)}{||\hat{u}_t - x_t||}$ }

    $\tilde{x}_t \leftarrow \argmin_{x} \left\| \frac{x + \hat{h}_t}{w+1} - \tau_t \right\|^2 + \lambda_1 f_t(x - \tilde{u}_t)+ {\lambda_2} \left\| x - \hat{x}_{t-1} \right\|^2$

    $D_{t+1}^2 \leftarrow D_t^2 + ||u_t - x_t||^2 - \theta^{-2}||\tilde{u}_t - x_t||^2$

    \textbf{Output:} $\tilde{x}_t$
    \caption{Consistent and Robust Tracking Algorithm (\cort)}
    \label{alg:laga}
  \end{algorithm}
\end{minipage}
\hfill
\begin{minipage}[t]{0.3\textwidth}
  \vspace{0pt} 
  \centering
\includegraphics[width=\linewidth]{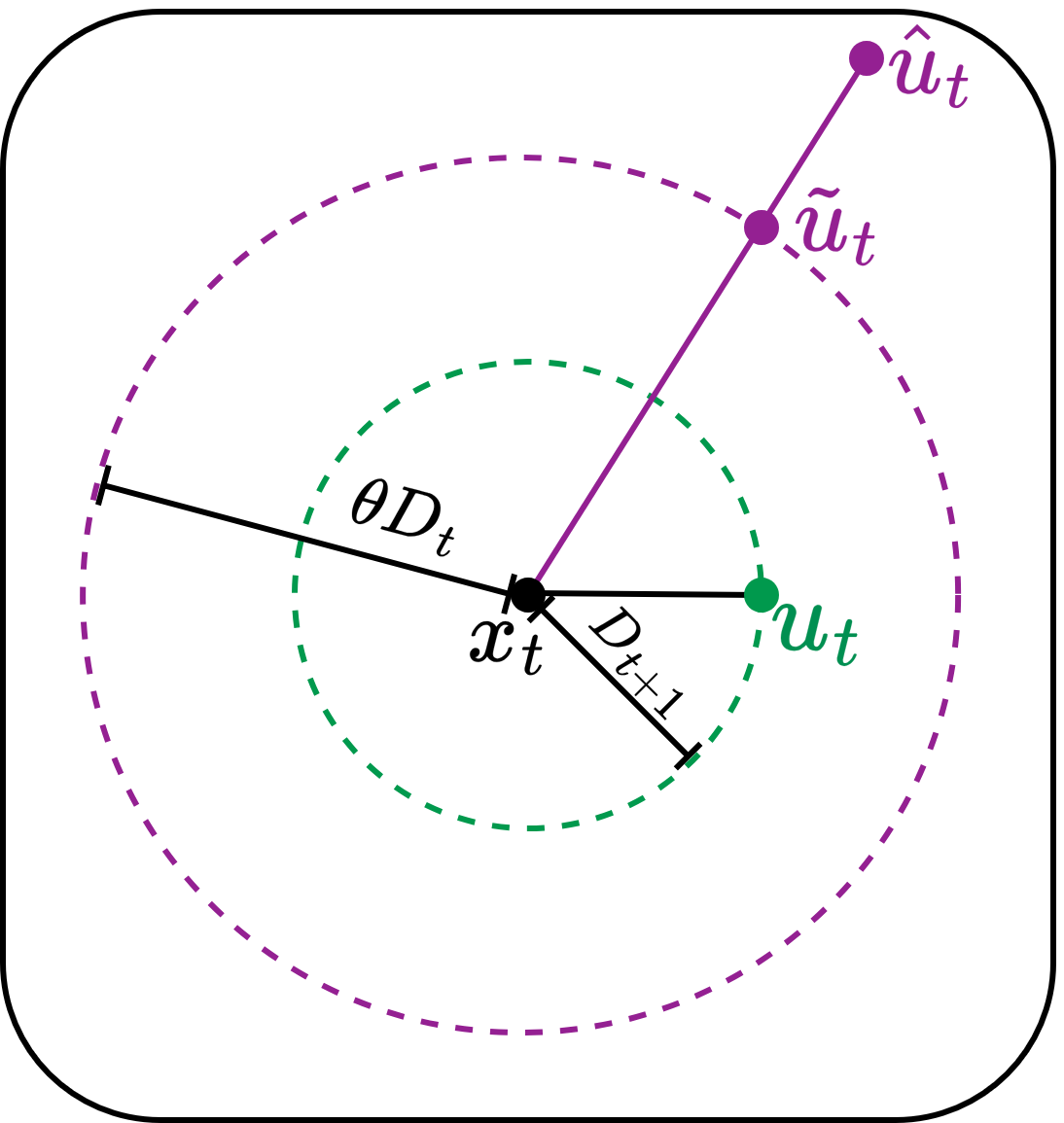}
  \captionof{figure}{Actual vs. predicted targets for a time step, and the corresponding update of $D_t$.}
  \label{fig:cort_example}
\end{minipage}

\label{fig:minipage_alg_fig}
\vspace{-2mm}
\end{figure}



  
  



\begin{theorem}
\label{thm:learning_augmented_perf}
Given parameter $\theta$, \cort is $\emph{\texttt{DF}}(\emph{\algName}, \emph{\informedGreedy}) (1 + \theta^2\mathcal{O}(1))$-robust and $\mathcal{C}$-consistent where:
\begin{align}
    \mathcal{C} \leq \psi(\theta) + (1 - \psi(\theta)) \cdot \emph{\texttt{DF}}(\emph{\algName}, \emph{\informedGreedy}) + \frac{2 \lambda_1 \lambda_2 \ell^2}{m \eta (\eta - m \lambda_1)} \cdot \frac{\theta^2}{1 + \theta^2},
\end{align}
and $\psi(\theta):\mathbb{R}_{\geq 0} \to [0,1]$ is an increasing function satisfying $\psi(0) = 0$ and $\psi(\infty) = 1$.
\end{theorem}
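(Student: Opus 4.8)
The plan is to analyze \cort by tracking two coupled quantities simultaneously: the \emph{prediction gap} $\|\tilde{x}_t - \hat{x}_t\|^2$ (how far \cort's action deviates from \informedGreedy's) and an \emph{accumulated history error} that measures how much the past-action window $\hat{h}_t$ of \informedGreedy differs from the window generated by \cort's own actions. Because \cort uses \informedGreedy's history rather than its own when solving the per-step optimization, the coupling is one-directional in the analysis, which is what makes the ``two-level contraction'' alluded to in the technical-novelty paragraph tractable. First I would set up the auxiliary objective: define $g_t(u)$ as the cost \informedGreedy incurs at step $t$ if the adversary's target were $u$, reuse Lemma~\ref{lem:phi_chi_convexity} to get strong convexity of $g_t$ with modulus controlled by $\eta - m\lambda_1$, and observe that \cort's action $\tilde{x}_t$ is exactly the minimizer of $g_t$ at the surrogate target $\tilde{u}_t$, while \informedGreedy's action $\hat{x}_t$ is the minimizer at the true $u_t$. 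A Lipschitz-stability bound on this minimizer in $u$ (the content of Lemma~\ref{lem:sum_diff_ys}) then yields $\|\tilde{x}_t - \hat{x}_t\| \le \frac{\ell\lambda_1}{\eta - m\lambda_1}\|\tilde{u}_t - u_t\| + (\text{history-error term})$.

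The crux is bounding $\sum_t \|\tilde{u}_t - u_t\|^2$, and this is where the $D_t$ update does its work. The update $D_{t+1}^2 = D_t^2 + \|u_t - x_t\|^2 - \theta^{-2}\|\tilde{u}_t - x_t\|^2$ is a potential-function telescoping device: summing it and using $D_1 = 0$, $D_{T+1}^2 \ge 0$ gives $\theta^{-2}\sum_t \|\tilde{u}_t - x_t\|^2 \le \sum_t \|u_t - x_t\|^2$. Combined with the projection step (Lines 3--4), which guarantees $\|\tilde{u}_t - x_t\| \le \theta D_t$ and $\|\tilde{u}_t - u_t\| \le \|x_t - u_t\|$ (since $\tilde{u}_t$ lies on the segment from $x_t$ toward $\hat{u}_t$, never overshooting), I would convert $\sum_t\|\tilde{u}_t - u_t\|^2$ into a bounded fraction of $\sum_t \|x_t - u_t\|^2$, and the latter is controlled by \informedGreedy's own adversarial cost via Assumption~\ref{asmp:convexity} ($f_t$ is $m$-strongly convex, $\arg\min f_t = \mathbf{0}$), hence by $\mathrm{Cost}(\informedGreedy)$. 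For the \textbf{robustness} bound, I would use the worst case: $\tilde{u}_t$ is always within $\theta D_t$ of $x_t$, so \cort's per-step deviation from \algName is at most $O(\theta^2)$ times terms already appearing in the \algName analysis; plugging this through the degradation-factor machinery of Theorem~\ref{thm:bestAlg_CR} gives $\mathrm{DF}(\algName,\informedGreedy)(1 + \theta^2 O(1))$. For \textbf{consistency} ($\hat{u}_t = u_t$), I would split $\sum_t\|\tilde{u}_t - u_t\|^2$ according to whether the projection is active: when $\|\hat{u}_t - x_t\| < \theta D_t$ the surrogate is exact and \cort matches \informedGreedy on the adversarial term, contributing the ``$\psi(\theta)$'' fraction that behaves like \informedGreedy; when the projection is active, the residual is charged against $D_t$, contributing the $(1-\psi(\theta))\,\mathrm{DF}(\algName,\informedGreedy)$ fraction plus the explicit cross-term $\frac{2\lambda_1\lambda_2\ell^2}{m\eta(\eta - m\lambda_1)}\cdot\frac{\theta^2}{1+\theta^2}$, which arises from the interaction between the switching cost ($\lambda_2$) and the projected adversarial surrogate. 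The function $\psi(\theta)$ would be defined as the (instance-uniform) fraction of total ``adversarial mass'' for which the projection is inactive; monotonicity and the limits $\psi(0)=0$, $\psi(\infty)=1$ follow because larger $\theta$ makes the constraint $\|\tilde{u}_t - x_t\|\le\theta D_t$ looser, so the projection triggers less often and \cort trusts predictions more.

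The main obstacle will be making the consistency decomposition rigorous: the ``fraction'' $\psi(\theta)$ must be extracted uniformly over all instances, which requires showing that whenever the projection is active the resulting error term can be absorbed \emph{either} into a multiple of \informedGreedy's cost \emph{or} into \algName's cost-gap, with the mixing weight depending only on $\theta$ and not on the instance. This hinges on a careful amortized argument over the $D_t$ sequence — essentially showing that the telescoped budget $\sum_t\|u_t - x_t\|^2$ is ``spent'' in a way whose split between the two regimes is governed by the single scalar $\theta$. A secondary technical point is handling the history error $\hat{h}_t$ versus \cort's realized window through the sliding-window Cauchy--Schwarz lemma (Lemma~\ref{lem:sum_chi}) so that the per-step contraction compounds into a geometric-type series with ratio strictly below $1$ under the stated parameter regime; this is the same mechanism used in Theorem~\ref{thm:bestAlg_CR}, so I would reuse it with $\tilde{u}_t$ in place of the (absent) adversarial term and track the extra $\theta$-dependent slack it introduces.
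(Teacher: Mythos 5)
Your robustness argument mirrors the paper's: bound $\mathrm{Cost}(\cort)$ against $\mathrm{Cost}(\algName)$ via the smoothness/Cauchy--Schwarz inequalities (Proposition~\ref{prop:prev_work_smoothness}, Lemma~\ref{lem:sum_chi}), control $\|\tilde x_t - x_t\|$ by $\frac{\lambda_1\ell}{\eta}\|x_t-\tilde u_t\|\le\frac{\lambda_1\ell\theta}{\eta}D_t$ via Lemma~\ref{lem:sum_diff_ys}, telescope the $D_t$-recurrence to get $\sum_t\|\tilde u_t-x_t\|^2\le\theta^2\sum_t\|u_t-x_t\|^2$, and close with the $m$-strong convexity lower bound on $\mathrm{Cost}(\algName)$. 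Your telescoping statement is actually cleaner than the paper's intermediate step. (One small slip: Lemma~\ref{lem:sum_diff_ys} gives Lipschitz constant $\lambda_1\ell/\eta$ in $u$, not $\ell\lambda_1/(\eta-m\lambda_1)$; the latter modulus only appears in $\eta_2$ for the strong convexity of the auxiliary map $g_t$.)

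For consistency, however, you take a genuinely different route and the gap you flag is the real one. You propose to partition time steps by whether the projection (Lines 3--4) fires, and define $\psi(\theta)$ as the fraction of total ``adversarial mass'' on which the projection is inactive. That $\psi$ is intrinsically instance-dependent, whereas the theorem needs a single function of $\theta$; you yourself call extracting it uniformly ``the main obstacle,'' and I don't see how the amortized budget argument you sketch would resolve it without importing exactly the geometric structure the paper exploits. The paper avoids the split entirely: under perfect prediction $\tilde u_t$ is, by construction, the point on the segment $[x_t,u_t]$ at distance $\min(\|u_t-x_t\|,\theta D_t)$ from $x_t$, so the minimizers $x_t$, $\tilde x_t$, $\hat x_t$ are collinear and there is a per-step scalar $\beta_t\in[0,1]$ with $\tilde x_t=\beta_t\hat x_t+(1-\beta_t)x_t$. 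Convexity of all three cost terms then gives $\mathrm{Cost}_t(\cort)\le\beta_t\,\mathrm{Cost}_t(\informedGreedy)+(1-\beta_t)\,\mathrm{Cost}_t(\algName)+(\text{switching cross-term})$, which is already in the $\psi+(1-\psi)\,\mathrm{DF}+\text{correction}$ shape; the bounds $\beta_t\le\frac{\ell}{m}\cdot\frac{\theta D_t}{D_{t+1}}$ and $1-\beta_t\le\frac{\ell}{m}\bigl(1-\frac{\theta D_t}{D_{t+1}}\bigr)$ (from Lemma~\ref{lem:sum_diff_ys}) deliver the monotonicity and limits of $\psi$. The other ingredient you are missing is the paper's characterization of the worst case: the adversary must keep $D_t\le\|u_t-x_t\|$ (so the projection is always active and $D_{t+1}=\|u_t-x_t\|$), which both forces the collinearity to bind and makes the cross-term from the switching cost tractable, yielding the explicit $\frac{2\lambda_1\lambda_2\ell^2}{m\eta(\eta-m\lambda_1)}\cdot\frac{\theta^2}{1+\theta^2}$ coefficient. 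Without this worst-case reduction and the convex-combination identity, your split-by-projection decomposition does not naturally reproduce the stated bound.
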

The full proof of Theorem~\ref{thm:learning_augmented_perf} is provided in Appendix~\S\ref{app:learning_augmented_proof}. As a sketch, we first show that the cost incurred by \cort deviates from that of \algName by at most a linear function of the aggregate deviation $\sum_t \|x_t - \tilde{x}_t\|^2$, where $x_t$ and $\tilde{x}_t$ denote the actions of \algName and \cort at time $t$, respectively. We then prove that this deviation is upper bounded by a factor proportional to $\theta^2$ times the cost of \algName, establishing the robustness guarantee. Furthermore, under perfect predictions, to construct a challenging instance, an adversary must increase the separation between its own action and that of \algName over time. Also, in such condition, the action of \cort at each time step is a convex combination of the actions of \algName and \informedGreedy. This linear relationship among the actions allows us to show that the cost of \cort is a convex combination of the costs of \algName and \informedGreedy, up to a bounded additive error, which yields the consistency bound.


\begin{remark}
\label{rem:Cort_rob_cons}
Theorem~\ref{thm:learning_augmented_perf} illustrates a trade-off in \emph{\cort} between its consistency and robustness, governed by the parameter $\theta$. As $\theta$ increases, robustness improves at most quadratically, while the consistency decreases. In the limit as $\theta \to \infty$, \emph{\cort} achieves its best possible consistency but completely sacrifices robustness.
\end{remark}

\vspace{-3mm}
\section{Case Study: Resource Allocation for Elastic and Inelastic Workloads}
\label{sec:experiments}
\vspace{-3mm}
We consider a case study involving resource allocation in cloud computing platforms handling both elastic and inelastic workloads. In this setting, we evaluate our proposed algorithms for \probName and compare them in average and adversarial scenarios.

\textbf{Experimental setup.} We model a cloud computing platform comprising multiple independent resources (e.g., processing units such as CPUs or GPUs), serving two categories of jobs. The first category, \emph{inelastic} jobs, consists of online job requests that require immediate allocation of resources, which remain occupied until the job is completed. The second category, \emph{elastic} jobs, comprises predefined jobs that can be paused and resumed over time.


The platform dynamically allocates a subset of resources to elastic workloads, while the remaining units are used to process inelastic workloads. The goal is to maintain long-term SLA requirements close to predefined targets, while serving as many inelastic jobs as possible. These inelastic workloads may vary over time (e.g., due to hourly or daily patterns), making future demand difficult to predict. At each time step, the system must decide what fraction of processing units to allocate to elastic jobs, leaving the remainder for inelastic requests.


\textbf{Constructing the \probName instance.} We construct instances of \probName as follows: the platform acts as the decision-making agent. At time $t$, the agent selects an action $x_t$, representing the fraction of available resources allocated to inelastic jobs ($\mathbb{D} = [0, 1]$). The target for the processing rate of elastic jobs is denoted by $\tau_t$, defined over a moving window of size $w$.  In addition, $1-u_t$ shows the workload demand of inelastic jobs during the next processing interval. In this setting, the \emph{tracking cost} captures deviations from the target elastic processing rate, while the \emph{adversarial cost} measures the gap between the actual allocation to elastic jobs and the maximum feasible allocation that would still satisfy all inelastic job requests.

\textbf{Workload dataset and parameter settings.} We use CPU utilization traces from the Google Cluster dataset (GCD)~\cite{GCD_dataset}, which contains 
utilization records from a total of 1,600 virtual machines. The dataset provides CPU and memory utilization measurements at five-minute intervals.
 We divide each day into three workload periods: 8\,PM--4\,AM (off-peak; low demand), 4\,AM--12\,PM (mid-peak; medium demand), and 12\,PM--8\,PM (on-peak; high demand). Accordingly, we set $\tau_t = 0.4$ during low-demand hours, $\tau_t = 0.3$ during medium-demand hours, and $\tau_t = 0.2$ during high-demand hours. These thresholds result in an average allocation of approximately $30\%$ across the day. (we have also evaluated other daily averages of $\tau_t$; results are provided in Appendix~\S\ref{app:experimental_detail}) At each time step $t$, we extract the inelastic job utilization from the GDC dataset and define $u_t$ as one minus this utilization.  To model adversarial behavior, we use a standard convex cost function, $f_t(x) = \|x\|^2$ which is commonly used in the literature~\cite{shi2020online, agrawal2019differentiable} We vary $\lambda_1$, $w$, $\theta$, $\lambda_2$, and the daily average of $\tau_t$ to evaluate their influence on the performance of online algorithms. When analyzing each parameter, we fix the others as follows: $\lambda_1 = 1$ (equal weight on elastic and inelastic jobs), $\lambda_2 = 0.1$ (to assign a 10\% weight to job-switching costs), $w = 12$ (corresponding to a one-hour history window), and $\theta = 0.5$.


\textbf{Prediction models.} Since both \cort and \greedy rely on predictions of $u_t$, we evaluate three prediction models in our analysis:  
(1) \emph{Predictor:} We employ an LSTM-based model~\cite{hochreiter1997long} to forecast $u_t$ based on its historical values (see Appendix~\S\ref{app:experimental_detail} for details).  
(2) \emph{Pessimistic:} We define the prediction as $\hat{u}_t = x_t + (x_t - u_t)$, where $x_t$ is the action taken by \algName at time $t$. This formulation reflects $u_t$ across $x_t$, resulting in a prediction that is deliberately misaligned with the true value, simulating an adversarial scenario.  
(3) \emph{Optimistic:} This model assumes perfect prediction scenario, i.e., $\hat{u}_t = u_t$.


\begin{figure}[t]
  \centering
  \begin{subfigure}[t]{0.48\linewidth}
    \centering
    \includegraphics[width=\linewidth]{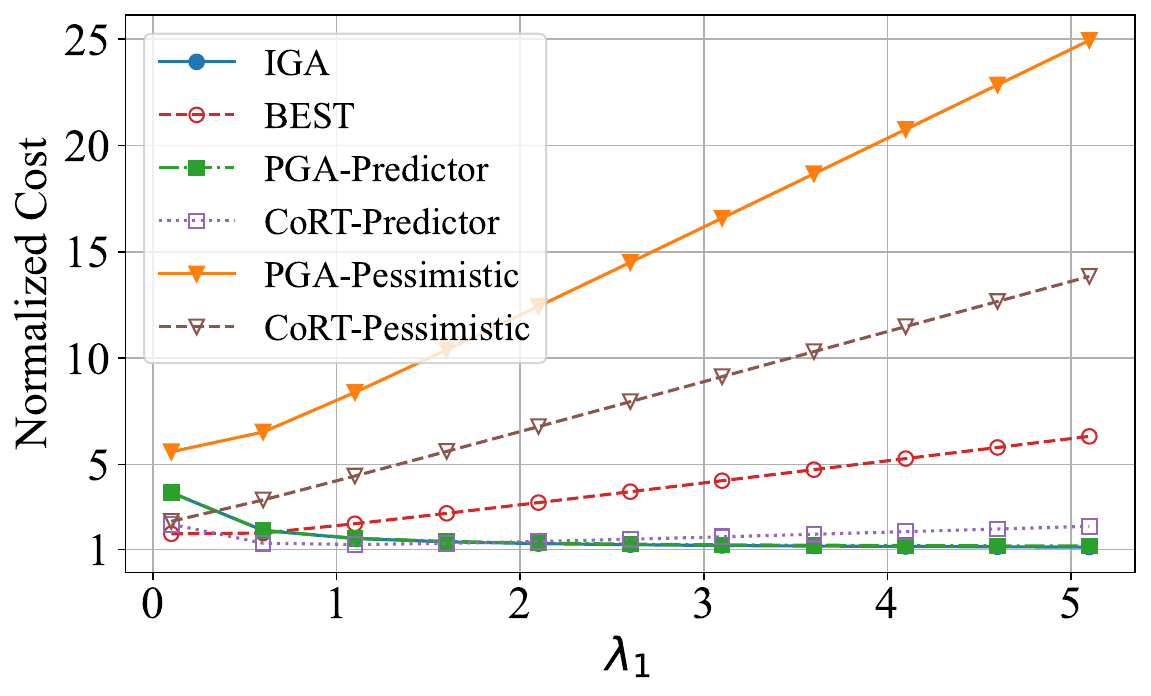}
    \caption{ Impact of parameter $\lambda_1$}
    \label{fig:elastic_l1}
  \end{subfigure}
  \hfill
  \begin{subfigure}[t]{0.48\linewidth}
    \centering
    \includegraphics[width=\linewidth]{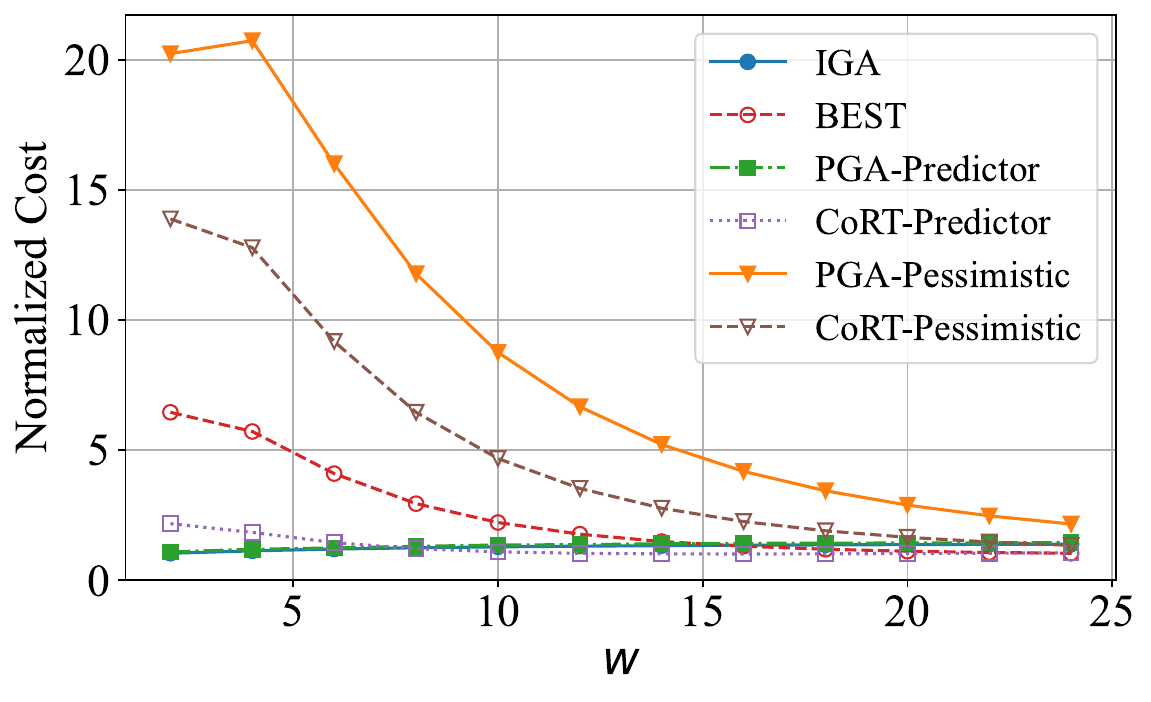}
    \caption{Impact of history's length $w$}
    \label{fig:elastic_w}
  \end{subfigure}
  \vspace{-0.5em}
    \caption{Impact of $\lambda_1$ (a) and $w$ (b) on the cost of different algorithms. Increasing $\lambda_1$ or decreasing $w$ amplifies the influence of the adversarial cost term, leading to higher overall costs.}
  \label{fig:elastic_exp}
  \vspace{-2mm}
\end{figure}

\begin{figure}[t]
    \centering
    \begin{subfigure}[t]{0.32\textwidth}
        \centering
        \includegraphics[width=\textwidth]{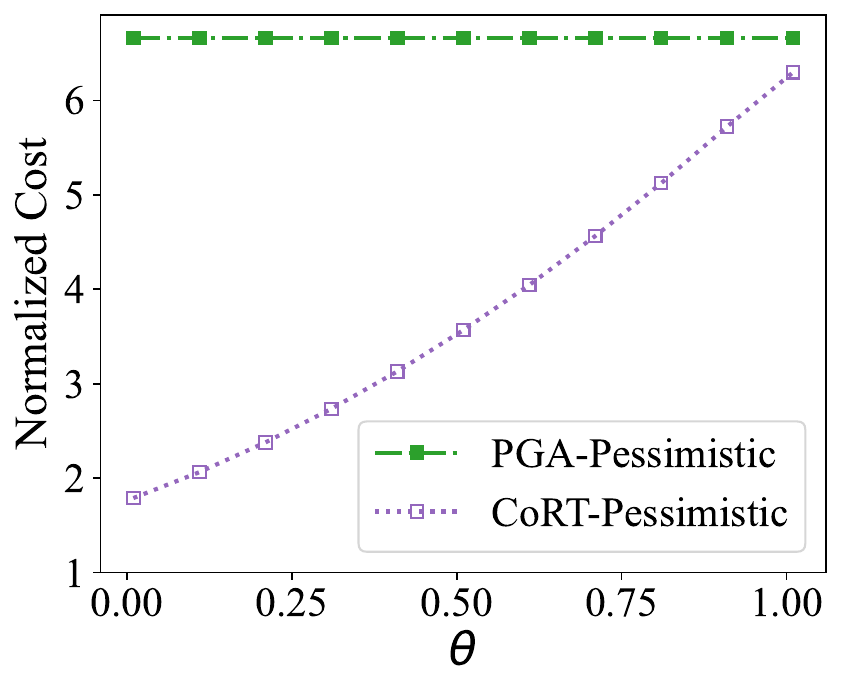}
        \label{fig:cort_pess}
    \end{subfigure}
    \hfill
    \begin{subfigure}[t]{0.32\textwidth}
        \centering
        \includegraphics[width=\textwidth]{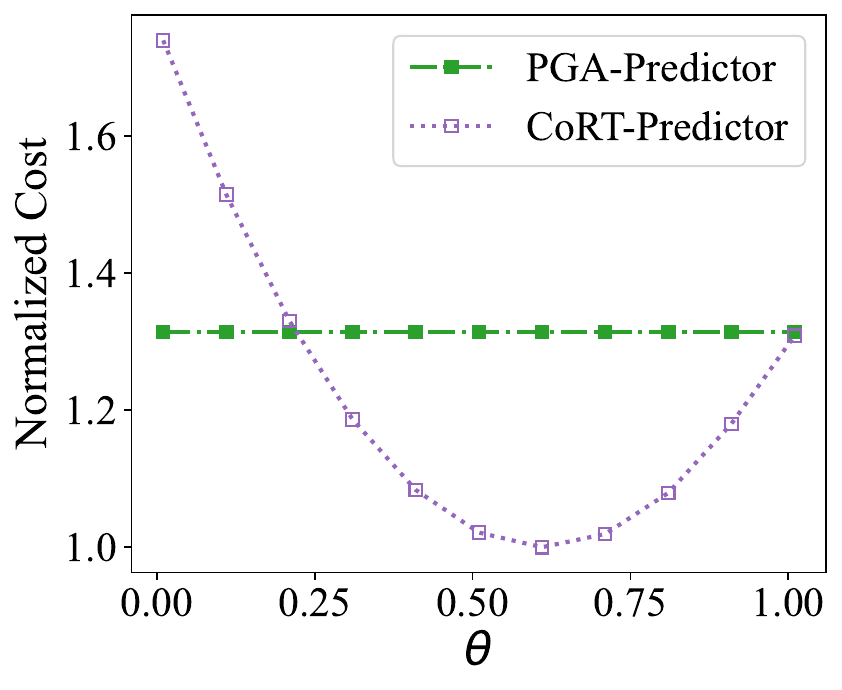}
        \label{fig:cort_pred}
    \end{subfigure}
    \hfill
    \begin{subfigure}[t]{0.32\textwidth}
        \centering
        \includegraphics[width=\textwidth]{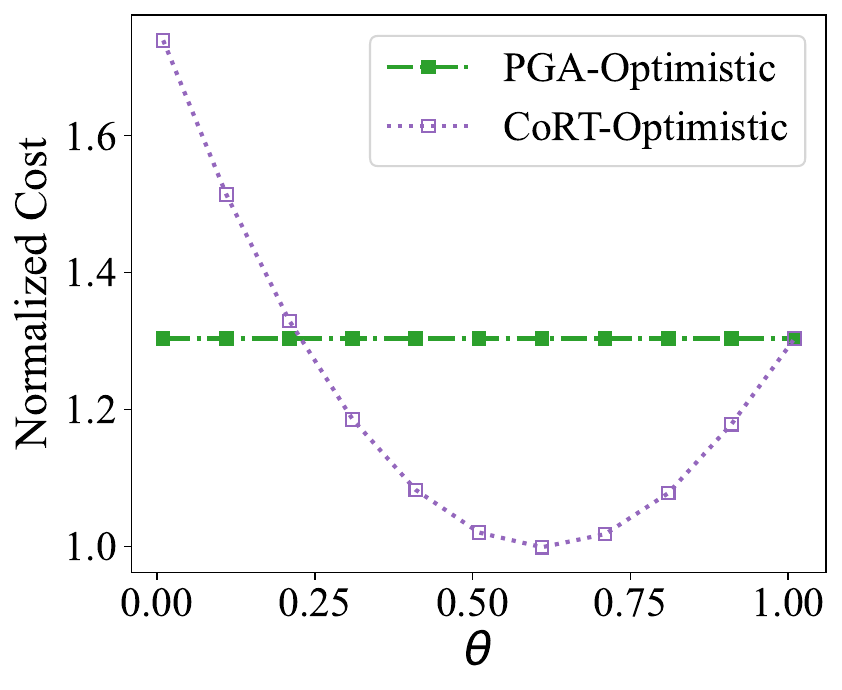}
        \label{fig:cort_opt}
    \end{subfigure}
    \vspace{-3mm}
    \caption{Comparison of the cost of \greedy and \cort as a function of $\theta$ under three prediction scenarios: pessimistic prediction (left), LSTM-based prediction (center), and perfect prediction (right). }
    \label{fig:exp_phi_impact}
    \vspace{-3mm}
\end{figure}

\textbf{Experimental results.} Figure~\ref{fig:elastic_exp} illustrates the impact of varying the parameter $\lambda_1$ and the history length $w$ on the cost of online algorithms. As shown, increasing $\lambda_1$ magnifies the influence of the adversarial cost component within the overall cost function. Consequently, the cost of online algorithms such as \algName, \greedy-Pessimistic, and \cort-Pessimistic---each lacking foresight into the adversary’s future targets---increases almost linearly with respect to $\lambda_1$, confirming the trend described in Remark~\ref{rem:best_linear_lambda}. Notably, the increase in cost for \algName is significantly smaller than that of \greedy-Pessimistic and \cort-Pessimistic, indicating its stronger robustness. Additionally, we observe that as the history length $w$ increases, the importance of action smoothness becomes more prominent, making the problem easier for online algorithms. In such settings, the gap between the cost of online algorithms and the optimal offline algorithm tends to narrow. Finally, we observe that the costs incurred by \greedy-Predictor and \cort-Predictor are close to those of \informedGreedy, which is due to the high accuracy of the \emph{Predictor} model in predicting $u_t$ (see Appendix~\S\ref{app:experimental_detail} for additional details).  

Another observation is that in certain problem instances, algorithms such as \cort-Predictor and \algName can achieve a lower cost than \informedGreedy. This may seem counterintuitive, as \informedGreedy has full knowledge of the adversary's target at the current time step. However, it still lacks information about future trajectory targets and future adversarial behavior. As a result, \algName---by leveraging history more effectively---can outperform it in some cases, but not certainly in worst-case as indicated in Theorem~\ref{thm:bestAlg_CR}.

Figure~\ref{fig:exp_phi_impact} illustrates the impact of the parameter $\theta$ on the cost of \cort under three prediction models: \emph{Optimistic}, \emph{Predictor}, and \emph{Pessimistic}. The results show that when \cort is provided with an adversarial prediction of $u_t$ (i.e., the \emph{Pessimistic} model), its cost increases almost quadratically with respect to $\theta$, confirming the theoretical result in Theorem~\ref{thm:learning_augmented_perf}. This suggests that, to ensure strong robustness, smaller values of $\theta$ should be chosen. Conversely, the analysis using the \emph{Optimistic} predictor indicates that increasing $\theta$ can lead to lower costs, thereby improving consistency. Together, these results highlight a fundamental trade-off between consistency and robustness in the performance of \cort, governed by the choice of the parameter $\theta$. We conducted further experimental analyses, the details of which are presented in Appendix~\S\ref{app:experimental_detail}.

\section{Concluding Remarks}
\label{sec:conclusion}
\vspace{-2mm}

We introduced a new framework for Smoothed Online Optimization in target tracking, which unifies tracking of a dynamic target, robustness to adversarial perturbations, and switching costs, into a single principled formulation. Our proposed algorithms, \algName and its learning-augmented counterpart \cort, offer both theoretical guarantees and strong empirical performance in applications such as elastic and inelastic workload scheduling. 
A promising direction is to design robust and competitive algorithms that relax the convexity and smoothness assumptions, thereby extending applicability to a broader range of practical settings. On the learning-augmented front, an interesting future work is to develop risk-aware learning-augmented algorithms that can dynamically adjust their reliance on predictions based on uncertainty quantification models~\cite{sun2024online,christianson2024risk}.


\section{Acknowledgment}
This research was supported in part by NSF grants CAREER 2045641, CPS-2136199, CNS-2106299, CNS-2102963, CSR-1763617, CNS-2106463, and CNS-1901137. We acknowledge their financial assistance in making this project possible.

\bibliographystyle{plain}
\bibliography{references} 
\newpage
\appendix

\section{Additional Literature Review}
\label{sec:background}

\textbf{Online Linear Tracking Control Problem} The online linear tracking control problem~\cite{lin2021perturbation,lin2024online,zhang2022adversarial} models a sequential decision-making scenario in which an agent selects actions over a horizon of $T$ time steps. At each time step $t$, given the current state $s_t \in \mathbb{R}$, the agent selects an action $x_t \in \mathbb{R}^d$. The environment then updates the state $s_{t+1}$ based on a known dynamics model that incorporates the previous state $s_t$, the current action $x_t$, and potentially an adversarial perturbation. The agent incurs a cost composed of a state-dependent loss $f_t(s_{t+1})$ and an action-dependent loss $c_t(x_t)$. 

This framework introduces significant challenges due to the interaction between the agent’s actions, the system dynamics, and adversarial perturbations—making it difficult to match the performance of an optimal offline algorithm that knows all future perturbations in advance. Nonetheless, the problem is highly relevant in several practical domains. For example, in \textit{autonomous systems}~\cite{chen2022online,xu2024adaptive}, self-driving vehicles must adjust their control strategies to track target trajectories despite disturbances such as wind or road condition changes. In \textit{energy grid management}~\cite{yan2023onlineGrid,zhang2023online}, EV charging infrastructures must dynamically adapt to unpredictable demand fluctuations while maintaining load balance. The model is also applicable to \textit{network congestion control}~\cite{abbasloo2020classic,li2021auto}, where network traffic must be regulated under fluctuating bandwidth constraints, and to \textit{robotic manipulation}~\cite{liu2020multi,amersdorfer2020real}, where robots must precisely follow motion plans despite external forces.

A common assumptions in previous works is the convexity of cost functions. If $x_t^*$ denotes the minimizer of the per-step cost, then tracking $x_t^*$ closely over time is key to minimizing cumulative cost. In some variants, such as the \textit{online tracking control with memory}, cost functions additionally depend on a history window of past actions, typically of size $w$. That is, the action cost at time $t$ may be a function $c_t(x_t, x_{t-1}, \ldots, x_{t-w})$. A prominent special case is the \textit{switching cost model}, where the cost penalizes rapid changes between consecutive actions. This is often expressed as
and has been widely studied~\cite{zhang2022adversarial,zhang2022optimal,zhao2023non,zhang2022smoothed,lechowicz2024online} to encourage smoother control policies.

Theoretical guarantees for this problem have been the subject of extensive research. In~\cite{zhang2022adversarial}, the authors propose an algorithm for online tracking control with memory using online convex optimization techniques and establish a regret bound of $\mathcal{O}(\log T \cdot \sqrt{T})$. In~\cite{lin2021perturbation}, a predictive control algorithm is introduced that forecasts $k$ steps ahead and selects actions accordingly. They show that the algorithm achieves linear regret in $T$, with the regret decreasing exponentially as a function of the prediction window size $k$. However, they also observe that the competitive ratio can increase exponentially with $k$, revealing a trade-off: longer prediction windows may reduce regret due to foresight, but at the cost of higher sensitivity to prediction errors. More recently,~\cite{lin2024online} proposed a gradient-based method that achieves a sublinear regret of $\mathcal{O}(\sqrt{T})$.

However, prior theoretically grounded works in this area primarily focus on minimizing short-term state and action costs, often under linear dynamic assumptions and immediate tracking objectives, without explicitly accounting for long-term behavioral constraints. As a result, they do not capture our smoothed tracking objective, which requires the agent to keep the average of its actions over a window close to a dynamically evolving sequence of targets.

\textbf{Online Convex Optimization}
The convexity assumption and leveraging an online convex optimization techniques is common in design and analysis of algorithms for online optimization for target tracking~\cite{zhao2023non,kumar2024online,zhao2024adaptivity,adib2023online,mhaisen2024adaptive,yan2023online}. In classic online convex optimization, the agent must selects an action sequentially in order to minimize the aggregate time dependent cost function. Different versions of online convex optimization have been introduced and studied in the literature. This includes time dependent convex cost function, $c_t(x_t)$~\cite{hazan2007logarithmic,jenatton2016adaptive,guo2022online}, convex optimization with switching cost~\cite{zhao2020dynamic,yu2017online,lin2012online}, convex optimization with memory~\cite{anava2015online,shi2020online}, enhancement using prediction~\cite{chen2015online,li2019online,li2020leveraging}, and considering adversarial perturbation in the cost function~\cite{shi2020online,foster2020logarithmic,cutkosky2017stochastic}. The similarity between online convex optimization and online optimization for target tracking problem and numerous number of previous works in online convex optimization have helped researchers to use their result for solving different aspects of the online target tracking problem. However, most of these works either omit the notion of tracking a time-varying target or focus only on instantaneous objectives, without modeling the long-term smoothed tracking similar to our targeted problem setting.

\section{Proofs of Theoretical Result}
\label{app:proofs}

We start by providing proofs of key lemmas that support the theoretical results presented in the main body of the paper.

\begin{proposition}[Lemma 4 from~\cite{shi2020online}]
\label{prop:prev_work_smoothness} 
If $f: \mathbb{R}^d \rightarrow \mathbb{R}^{+}\cup\{0\}$ is convex and $\ell$-smooth, for any input point $x,$ and $y$, and positive variable $\delta$ we have:
\begin{align*}
    f(y) \leq (1+\delta)f(x) + (1+\frac{1}{\delta})\ \frac{\ell}{2} ||y-x||^2.
\end{align*}
\end{proposition}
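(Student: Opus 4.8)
The plan is to combine the standard quadratic upper bound from $\ell$-smoothness with the self-bounding property of nonnegative smooth convex functions, and then to split the resulting cross term with a Young-type inequality whose weight is tuned to produce exactly the factor $\delta$.

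First I would invoke the descent lemma: since $f$ is $\ell$-smooth, for all $x,y$,
\begin{equation}
f(y) \le f(x) + \langle \nabla f(x), y-x\rangle + \frac{\ell}{2}\|y-x\|^2 .
\end{equation}
Next I would establish the self-bounding gradient inequality $\|\nabla f(x)\|^2 \le 2\ell\, f(x)$. This is the one place where the hypotheses that $f$ is convex \emph{and} globally nonnegative are both used. Concretely, apply the descent lemma at the point $z = x - \tfrac{1}{\ell}\nabla f(x)$ to get $f(z) \le f(x) - \tfrac{1}{2\ell}\|\nabla f(x)\|^2$; since $f(z) \ge 0$ by Assumption/hypothesis on nonnegativity, rearranging yields $\|\nabla f(x)\|^2 \le 2\ell\, f(x)$. (Convexity is what guarantees this minimization step along the negative gradient is legitimate as a bound — equivalently one can cite that $\tfrac{1}{2\ell}\|\nabla f(x)\|^2 \le f(x) - \min f \le f(x)$.)

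Then I would bound the inner product in the descent lemma by Cauchy–Schwarz followed by Young's inequality with parameter $\delta/\ell$:
\begin{equation}
\langle \nabla f(x), y-x\rangle \le \|\nabla f(x)\|\,\|y-x\| \le \frac{\delta}{2\ell}\|\nabla f(x)\|^2 + \frac{\ell}{2\delta}\|y-x\|^2 \le \delta f(x) + \frac{\ell}{2\delta}\|y-x\|^2,
\end{equation}
where the last step substitutes the self-bounding inequality. Plugging this back into the descent lemma and collecting the $f(x)$ terms and the $\|y-x\|^2$ terms gives
\begin{equation}
f(y) \le (1+\delta) f(x) + \Big(1 + \frac{1}{\delta}\Big)\frac{\ell}{2}\|y-x\|^2,
\end{equation}
which is the claim.

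The only genuinely non-routine step is the second one — deriving $\|\nabla f(x)\|^2 \le 2\ell f(x)$ — because it is where the global nonnegativity of $f$ is essential; everything else is the descent lemma plus an elementary weighted AM–GM split, with the weight chosen precisely so the $\|\nabla f(x)\|^2$ prefactor cancels against $2\ell$ to leave $\delta f(x)$. Since this proposition is quoted verbatim from prior work, I would keep the argument short and may simply cite it, but the self-contained derivation above is exactly the three-line proof.
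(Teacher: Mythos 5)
The paper does not give its own proof of this proposition: the proof environment simply cites Lemma~4 of Shi et al.\ (2020) and stops, so there is no in-paper argument to compare yours against. Your self-contained derivation is correct, and it is the standard one: the descent lemma from $\ell$-smoothness, the self-bounding gradient inequality $\|\nabla f(x)\|^2 \le 2\ell f(x)$, and a Young split with weight $\ell/\delta$ on the cross term combine to give exactly $(1+\delta)f(x) + (1+\tfrac{1}{\delta})\tfrac{\ell}{2}\|y-x\|^2$.

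One inaccuracy in your commentary is worth flagging. You claim that convexity is what makes the self-bounding step legitimate, but in fact that step uses only $\ell$-smoothness (to apply the descent lemma at $z = x - \ell^{-1}\nabla f(x)$, obtaining $f(z) \le f(x) - \tfrac{1}{2\ell}\|\nabla f(x)\|^2$) and global nonnegativity (to conclude $f(z) \ge 0$). Convexity plays no role there, and since the descent lemma itself also needs only $\ell$-smoothness, your proof as written never invokes convexity at all. This does not invalidate anything—the stated hypotheses contain everything you actually used—but the conclusion is that the proposition holds under hypotheses slightly weaker than stated, and the parenthetical crediting convexity for the self-bounding inequality should be dropped or corrected.
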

\begin{proof}
    The proof of the above proposition is given in lemma~4 of~\cite{shi2020online}.
\end{proof}

\begin{lemma}
\label{lem:sum_diff_ys}
Consider the action selection algorithm defined as:
\begin{align*}
    x(u, h) = \argmin_x\ ||\frac{x+h}{w+1} - \tau||^2+ \lambda_1 f(x - u) + + {\lambda_2} ||x - z||^2,
\end{align*}
where $f(\cdot)$ is an $m$-strongly convex, and $\ell$-smooth function. Then, the following inequality holds:

\begin{align*}
    ||x(\hat{u}, \hat{h}) - x(u, h)|| \;\leq\;&  \frac{1}{\eta}\bigg[\lambda_1\ell||\hat{u} - u|| +  \frac{1}{(w+1)^2}||\hat{h} - {h}||\bigg].
\end{align*}
where $\eta = \frac{2}{(w+1)^2} + \lambda_1 m + 2\lambda_2$.
\end{lemma}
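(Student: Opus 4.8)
My plan is to exploit the first-order optimality conditions for the two minimization problems defining $x(\hat u,\hat h)$ and $x(u,h)$, together with the strong convexity provided by the quadratic tracking and switching terms and the $m$-strong convexity of $f$. Write $\Phi_{u,h}(x) = \|\tfrac{x+h}{w+1}-\tau\|^2 + \lambda_1 f(x-u) + \lambda_2\|x-z\|^2$. A direct computation gives $\nabla\Phi_{u,h}(x) = \tfrac{2}{(w+1)^2}(x+h-(w+1)\tau) + \lambda_1\nabla f(x-u) + 2\lambda_2(x-z)$. The function $\Phi_{u,h}$ is $\eta$-strongly convex with $\eta = \tfrac{2}{(w+1)^2} + \lambda_1 m + 2\lambda_2$, since the tracking term contributes $\tfrac{2}{(w+1)^2}$, the switching term contributes $2\lambda_2$, and $\lambda_1 f$ contributes $\lambda_1 m$ to the Hessian lower bound. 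Let $x^\star = x(u,h)$ and $\hat x^\star = x(\hat u,\hat h)$, so that $\nabla\Phi_{u,h}(x^\star)=0$ and $\nabla\Phi_{\hat u,\hat h}(\hat x^\star)=0$ (ignoring for the moment the projection onto $\mathbb D$, which only helps via non-expansiveness).

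The key step is the standard sensitivity/stability argument: by $\eta$-strong convexity of $\Phi_{u,h}$ applied at $x^\star$ and $\hat x^\star$,
\begin{align*}
\eta\,\|\hat x^\star - x^\star\|^2 \;\leq\; \langle \nabla\Phi_{u,h}(\hat x^\star) - \nabla\Phi_{u,h}(x^\star),\, \hat x^\star - x^\star\rangle \;=\; \langle \nabla\Phi_{u,h}(\hat x^\star),\, \hat x^\star - x^\star\rangle.
\end{align*}
Now I substitute $\nabla\Phi_{u,h}(\hat x^\star) = \nabla\Phi_{u,h}(\hat x^\star) - \nabla\Phi_{\hat u,\hat h}(\hat x^\star)$, using that the latter vanishes. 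The difference of the two gradients at the common point $\hat x^\star$ isolates only the terms where the parameters differ: the $h$-dependence contributes $\tfrac{2}{(w+1)^2}(h - \hat h)$, and the $u$-dependence contributes $\lambda_1(\nabla f(\hat x^\star - u) - \nabla f(\hat x^\star - \hat u))$, whose norm is at most $\lambda_1\ell\|\hat u - u\|$ by $\ell$-smoothness of $f$. Applying Cauchy–Schwarz,
\begin{align*}
\eta\,\|\hat x^\star - x^\star\|^2 \;\leq\; \Big(\tfrac{2}{(w+1)^2}\|\hat h - h\| + \lambda_1\ell\|\hat u - u\|\Big)\,\|\hat x^\star - x^\star\|,
\end{align*}
and dividing through by $\|\hat x^\star - x^\star\|$ yields a bound with a factor $2/(w+1)^2$ on the history term. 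To obtain the stated bound with the sharper factor $1/(w+1)^2$, I would instead split the argument: the tracking-term contribution $\tfrac{2}{(w+1)^2}(h-\hat h)$ can be paired against the $\tfrac{2}{(w+1)^2}$-strongly convex tracking part alone when bounding the projection of $\hat x^\star - x^\star$ along that component, effectively moving one factor of $2$ from numerator into the denominator's $\eta$; more cleanly, one can absorb the $h$-perturbation into a shifted target $\tau$ and note that the displacement it induces is $\|h - \hat h\|/(w+1)^2$ times a contraction factor $\le 1$. I would carry this out by treating the map $(u,h)\mapsto x(u,h)$ as the fixed point of the gradient step and bounding each perturbation separately through the corresponding curvature term.

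The main obstacle I anticipate is precisely getting the constant $1/(w+1)^2$ rather than $2/(w+1)^2$ on the history term; the naive strong-convexity sensitivity bound overcounts by a factor of $2$ there, and one has to argue more carefully — either by a change of variables that reparametrizes the tracking target, or by a component-wise decomposition of the optimality-condition difference — to get the tight constant. The projection onto $\mathbb D$ is a minor technical point: since $\mathbb D$ is convex and compact, the projected solutions satisfy a variational inequality rather than $\nabla\Phi = 0$, but the same strong-convexity inequality goes through because the projection is non-expansive and the variational-inequality terms have the favorable sign. The $\ell$-smoothness and $m$-strong convexity of $f$ are used in exactly the two places indicated above, so no additional assumptions beyond those already stated are needed.
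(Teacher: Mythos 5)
Your core argument is essentially identical to the paper's: both use the first-order optimality conditions $\nabla\Phi_{u,h}(x^\star)=0$ and $\nabla\Phi_{\hat u,\hat h}(\hat x^\star)=0$, evaluate the gradient difference of $\Phi_{u,h}$ and $\Phi_{\hat u,\hat h}$ at the common point $\hat x^\star$, invoke the $\ell$-smoothness of $f$ to bound the $\nabla f$ part, and use $\eta$-strong convexity of $\Phi_{u,h}$ to convert the gradient bound into a distance bound. So the first two paragraphs of your proposal are correct and match the paper.

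Where you go astray is in the last part. You correctly notice that this derivation yields a coefficient of $\tfrac{2}{(w+1)^2}$ on the $\|\hat h - h\|$ term, not the $\tfrac{1}{(w+1)^2}$ in the lemma statement, and you then sketch two ways to ``tighten'' it. Neither works. The change-of-variables idea ($h\mapsto \hat h$, $\tau\mapsto \tau + (\hat h - h)/(w+1)$) merely reparametrizes the problem; the resulting sensitivity to the shift in $\tau$, measured through the same $\eta$-strongly-convex objective, is again $\tfrac{2}{(w+1)}\cdot\tfrac{1}{\eta}\|\Delta\tau\|$, which after substituting $\|\Delta\tau\| = \|\hat h - h\|/(w+1)$ gives back exactly $\tfrac{2}{(w+1)^2}\cdot\tfrac{1}{\eta}\|\hat h - h\|$. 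The ``component-wise decomposition'' is not carried out and does not obviously lead anywhere, because the minimizer couples all three terms through a single first-order condition.

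In fact, the constant $\tfrac{1}{(w+1)^2}$ in the lemma statement appears to be wrong, and the $\tfrac{2}{(w+1)^2}$ that you (and the paper's own proof) derive is the correct one. A quick sanity check: take $d=1$, $\lambda_2 = 0$, and let $\lambda_1 \to 0^+$. Then $x(u,h)\to (w+1)\tau - h$, so $\|x(\hat u,\hat h) - x(u,h)\|\to\|\hat h - h\|$, while $\eta\to 2/(w+1)^2$. The stated bound would then require $\|\hat h - h\| \le \tfrac{1}{\eta}\cdot\tfrac{1}{(w+1)^2}\|\hat h - h\| = \tfrac{1}{2}\|\hat h - h\|$, which is false; the factor $\tfrac{2}{(w+1)^2}$ gives exactly $\|\hat h - h\|\le\|\hat h - h\|$, which is tight. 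The paper's Appendix proof does obtain $\tfrac{2}{(w+1)^2}\|\hat h - h\| + \lambda_1\ell\|\hat u - u\|$ as the gradient-norm bound, and the lemma statement simply drops the factor of $2$. So the lesson here is not that you need a cleverer argument to sharpen the constant, but that you should have trusted your derivation and flagged the stated constant as an error rather than inventing unsound routes to reconcile them.
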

\begin{proof}
Let define function $\phi\bigl(x; u, h\bigr)$ as follows:
\[
\phi\bigl(x; u, h\bigr) \;=\;
||\tfrac{x+h}{w+1} - \tau||^2 
+ \lambda_1\,f\bigl(x - u\bigr)
+{\lambda_2}\,\|x - z\|^2.
\]

We can rewrite it as:
\begin{align*}
    \phi\bigl(x; u, h\bigr) \;=\;
\frac{1}{(w+1)^2}\,\|x + h - (w+1)\tau\|^2 
+ \lambda_1\,f\bigl(x - u\bigr)
+ {\lambda_2}\,\|x - z\|^2.
\end{align*}

The gradient of $\phi\bigl(x; u, h\bigr)$ can be derived as follows:
\begin{align*}
    \nabla_x\,\phi\bigl(x;u,h\bigr) \;=\; 
\frac{2}{(w+1)^2}\,\bigl[x + h - (w+1)\tau\bigr] +
\lambda_1\,\nabla f\bigl(x - u\bigr) +
2 \lambda_2\,\bigl(x - z\bigr).
\end{align*}

By definition, we have:
\begin{align*}
    x(u,h) &= \arg\min_{x}\;\phi(x;u,h),\\
    x(\hat{u},\hat{h}) &= \arg\min_{x}\;\phi(x;\hat{u},\hat{h}),
\end{align*}
which implies
\begin{align}
\label{eq:zero_grad_x}
    \nabla_x\,\phi\bigl(x(u,h);u,h\bigr) &= 0,\\
    \label{eq:zero_grad_xhat}
    \nabla_x\,\phi\bigl(x(\hat{u},\hat{h});\hat{u},\hat{h}\bigr) &= 0.
\end{align}

According to~\eqref{eq:zero_grad_xhat} we get:
\begin{align}
\nabla_x\,\phi\bigl(x(\hat{u},\hat{h});u,h\bigr)
\;=\;&
\nabla_x\,\phi\bigl(x(\hat{u},\hat{h});u,h\bigr) -
\nabla_x\,\phi\bigl(x(\hat{u},\hat{h});\hat{u},\hat{h}\bigr)\notag\\
\label{eq:grad_x_diff}
\;=\;& \frac{2}{(w+1)^2}\,(\hat{h}-h) +
\lambda_1\,\Bigl[
    \nabla f\bigl(x(\hat{u},\hat{h}) - u\bigr) -
    \nabla f\bigl(x(\hat{u},\hat{h}) - \hat{u}\bigr)
\Bigr].
\end{align}

Since $f(\cdot)$ is $\ell$-strongly smooth, we get:
\begin{align}
    \label{eq:f_grad_smooth}
    ||\nabla f\bigl(x(\hat{u},\hat{h}) - u\bigr) -
    \nabla f\bigl(x(\hat{u},\hat{h}) - \hat{u}\bigr)|| \leq&\ \ell\ ||\big(x(\hat{u},\hat{h}) - u\big) - \big(x(\hat{u},\hat{h}) - \hat{u}\big)|| \leq\ \ell\ ||\hat{u} - u||. 
\end{align}

In addition , since $f(\cdot)$ is $\eta$-strongly convex, we get:
\begin{align}
    \eta\ ||x(\hat{u}, \hat{h}) - x(u,h)|| \leq&\ ||\nabla_x \phi(x(\hat{u},\hat{h});u,h) - \nabla_x \phi(x(u,h);u,h)||\notag\\
    \label{eq:f_grad_convex}
    \leq&\ ||\nabla_x \phi(x(\hat{u},\hat{h});u,h)||.
\end{align}
where the last inequality holds due to~\eqref{eq:zero_grad_x}. Combining~\eqref{eq:grad_x_diff},~\eqref{eq:f_grad_smooth}, and~\eqref{eq:f_grad_convex} completes the proof.
\end{proof}

\begin{lemma}
    \label{lem:phi_chi_convexity}
     Consider the function $g_t(u)$ defined as:
    \begin{align*}
    g_t(u) =& \min_x ||\frac{x + h_t}{w+1} - \tau_t||^2 + {\lambda_1} \cdot f_t(x - u) +  {\lambda_2} ||x - x_{t-1} ||^2,
\end{align*}

where $f_t(.)$ is $m$-strongly convex function. The function $g_t(u)$ is $\eta_2$-strongly convex, with $\eta_2$ given by:
\begin{align*}
    \eta_2 = \ m\lambda_1(1-\frac{m \lambda_1}{\eta}),
\end{align*}
where $\eta = \frac{2}{(w+1)^2} + m\cdot \lambda_1 + 2\lambda_2$.
\end{lemma}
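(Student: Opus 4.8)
The plan is to reduce $\eta_2$-strong convexity of $g_t$ to a monotonicity inequality for its gradient, obtained from an envelope-theorem identity, and then to close that inequality with a perfect square. Write $\phi(x;u) := \|\tfrac{x+h_t}{w+1}-\tau_t\|^2 + \lambda_1 f_t(x-u) + \lambda_2\|x-x_{t-1}\|^2$, so $g_t(u) = \min_x \phi(x;u)$. The function $\phi(\cdot;u)$ is $\eta$-strongly convex in $x$, being a sum of terms that are $\tfrac{2}{(w+1)^2}$-, $m\lambda_1$-, and $2\lambda_2$-strongly convex respectively (and $\tfrac{2}{(w+1)^2}+m\lambda_1+2\lambda_2 = \eta$); hence the inner minimizer $x^\star(u)$ is unique and, since $f_t$ is $\ell$-smooth (hence $C^1$) and $x^\star(\cdot)$ is continuous, Danskin's theorem gives that $g_t$ is differentiable with $\nabla g_t(u) = \nabla_u\phi(x^\star(u);u) = -\lambda_1\nabla f_t\bigl(x^\star(u)-u\bigr)$. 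It then suffices to prove $\langle \nabla g_t(u_1)-\nabla g_t(u_2),\, u_1-u_2\rangle \ge \eta_2\,\|u_1-u_2\|^2$ for arbitrary $u_1,u_2$.

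First I would write the first-order optimality condition for the inner problem, $\tfrac{2}{(w+1)^2}\bigl(x^\star(u)+h_t-(w+1)\tau_t\bigr) + \lambda_1\nabla f_t\bigl(x^\star(u)-u\bigr) + 2\lambda_2\bigl(x^\star(u)-x_{t-1}\bigr) = 0$, and subtract it at $u_1$ and $u_2$. Abbreviating $x_i := x^\star(u_i)$, $v_i := x_i-u_i$, $p_i := \nabla f_t(v_i)$, $\Delta x := x_1-x_2$, $\Delta v := v_1-v_2$, $\Delta u := u_1-u_2 = \Delta x-\Delta v$, and $a := \tfrac{2}{(w+1)^2}+2\lambda_2$ (so $a = \eta-m\lambda_1$ and $\eta-a = m\lambda_1$), the constants $\tau_t$, $h_t$, $x_{t-1}$ cancel and one gets the exact linear identity $a\,\Delta x + \lambda_1(p_1-p_2) = 0$. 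Hence $\langle \nabla g_t(u_1)-\nabla g_t(u_2),\Delta u\rangle = -\lambda_1\langle p_1-p_2,\Delta u\rangle = a\langle \Delta x,\Delta u\rangle = a\|\Delta x\|^2 - a\langle\Delta x,\Delta v\rangle$. Using $-a\Delta x = \lambda_1(p_1-p_2)$ once more together with the monotonicity estimate $\langle p_1-p_2,\Delta v\rangle \ge m\|\Delta v\|^2$ from $m$-strong convexity of $f_t$, we get $-a\langle\Delta x,\Delta v\rangle = \lambda_1\langle p_1-p_2,\Delta v\rangle \ge m\lambda_1\|\Delta v\|^2$, and therefore $\langle \nabla g_t(u_1)-\nabla g_t(u_2),\Delta u\rangle \ge a\|\Delta x\|^2 + m\lambda_1\|\Delta v\|^2$.

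It remains to compare this with $\|\Delta u\|^2$. By the triangle inequality applied to $\Delta u = \Delta x-\Delta v$, with $\alpha := \|\Delta x\|$ and $\beta := \|\Delta v\|$ we have $\|\Delta u\|^2 \le (\alpha+\beta)^2$, so it is enough to verify $a\alpha^2 + m\lambda_1\beta^2 \ge \eta_2(\alpha+\beta)^2$, where by definition $\eta_2 = m\lambda_1\bigl(1-\tfrac{m\lambda_1}{\eta}\bigr) = \tfrac{m\lambda_1 a}{\eta}$. Multiplying through by $\eta$ and substituting $\eta-m\lambda_1 = a$ and $\eta-a = m\lambda_1$ collapses $\eta\bigl(a\alpha^2+m\lambda_1\beta^2\bigr) - m\lambda_1 a(\alpha+\beta)^2$ into the perfect square $(a\alpha - m\lambda_1\beta)^2 \ge 0$, which finishes the proof.

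The step I expect to be most delicate is extracting the tight constant: it is tempting to first lower-bound $\|\Delta x\|$ by a multiple of $\|\Delta u\|$ (e.g.\ via Lemma~\ref{lem:sum_diff_ys}) and then discard the $\|\Delta v\|$ term, but this loses a factor of $m\lambda_1/\eta$; the correct route keeps both $\|\Delta x\|$ and $\|\Delta v\|$ in play until the final completion of the square, where the identities $a = \eta-m\lambda_1$ and $\eta-a = m\lambda_1$ make the cross terms match exactly. A secondary point to handle carefully is the legitimacy of the envelope identity for $\nabla g_t$; it rests only on uniqueness of $x^\star(u)$ and $C^1$-smoothness of $f_t$, and if one prefers to avoid Danskin's theorem, the identical computation goes through with $-\lambda_1\nabla f_t\bigl(x^\star(u)-u\bigr)$ treated as an element of $\partial g_t(u)$, using that $g_t$ is a partial minimization of the jointly convex $\phi$.
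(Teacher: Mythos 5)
Your proof is correct, and it takes a genuinely different route from the paper's. The paper proves strong convexity directly from the Jensen-type definition: it first reparametrizes $x \to x + u$ so that $f_t$ no longer depends on $u$, picks the inner minimizers $x_1, x_2$ at $u_1, u_2$, plugs the convex combination $\gamma x_1 + (1-\gamma)x_2$ into the minimization at $\gamma u_1 + (1-\gamma) u_2$, and closes the resulting inequality with a perfect square at the vector level, $(\sqrt{\eta}\,(x_1-x_2) + \tfrac{\eta-m\lambda_1}{\sqrt{\eta}}(u_1-u_2))^2 \ge 0$. You instead characterize strong convexity via gradient monotonicity, obtain $\nabla g_t$ from Danskin/envelope, exploit the exact cancellation $a\,\Delta x + \lambda_1(p_1-p_2)=0$ from first-order optimality, and, after the triangle inequality $\|\Delta u\| \le \|\Delta x\| + \|\Delta v\|$, complete a scalar perfect square $(a\alpha - m\lambda_1 \beta)^2 \ge 0$; both squares encode the same algebraic identity $\eta_2 = m\lambda_1 a/\eta$ with $a=\eta-m\lambda_1$, so the constants match exactly. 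What your route buys is a shorter, more conceptual derivation that makes the constant's origin transparent (it drops out of a $2\times 2$ quadratic form in $(\|\Delta x\|,\|\Delta v\|)$), and the first-order identity makes the cancellation of $h_t, \tau_t, x_{t-1}$ automatic; what it costs is that the primary argument invokes differentiability of $g_t$, hence uses $\ell$-smoothness of $f_t$, whereas the paper's argument needs only $m$-strong convexity as stated in the lemma. You correctly flag this and note the subgradient workaround (selecting $-\lambda_1 p_i \in \partial g_t(u_i)$ from the optimality inclusion and using strong monotonicity of $\partial f_t$), which restores full generality and is worth spelling out if you want a self-contained proof under only the lemma's stated hypotheses.
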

\begin{proof}
   To simplify the analysis, we rewrite $g_t(u)$ as:
    \begin{align*}
    g_t(u) =& \min_x ||\frac{x + u + h_t}{w+1} - \tau_t||^2 + {\lambda_1} \cdot f_t(x) +  {\lambda_2} ||x + u - x_{t-1} ||^2,
\end{align*}

To prove that \( g_t(u) \) is \( \eta_2 \)-strongly convex, we need to verify the following inequality for any \( u_1, u_2 \) and \( \gamma \in [0, 1] \):
\begin{align*}
    g_t(\gamma u_1 + (1-\gamma) u_2) \leq \gamma g_t(u_1) + (1-\gamma) \phi(u_2) -\frac{\eta_2}{2} \gamma (1-\gamma) ||u_1 - u_2||^2.
\end{align*}

Let: 
\begin{align*}
    x_1 &= \argmin_x  \left\| \frac{x + u_1 + h_t}{w+1} - \tau_t \right\|^2 + \lambda_1 f_t(x) + {\lambda_2} \|x + u_1 - x_{t-1}\|^2, \\
    x_2 &= \argmin_x  \left\| \frac{x + u_2 + h_t}{w+1} - \tau_t \right\|^2 + \lambda_1 f_t(x) + {\lambda_2} \|x + u_2 - x_{t-1}\|^2.
\end{align*}

As $g_t(\cdot)$ is strongly convex we get:
\begin{align*}
    &\gamma g_t(u_1) + (1-\gamma) g_t(u_2) - \frac{\eta_2}{2} \gamma (1-\gamma) ||u_1 - u_2||^2\\
    =& {\gamma }||\frac{x_1 + u_1 + h_t}{w+1} - \tau_t||^2 + \gamma {\lambda_1} \cdot f_t(x_1) +  \gamma \lambda_2 ||x_1 + u_1 - x_{t-1} ||^2\\
    +&  {(1-\gamma)}||\frac{x_2 + u_2 + h_t}{w+1} - \tau_t||^2 + (1-\gamma) {\lambda_1} \cdot f_t(x_2) +  (1-\gamma) \lambda_2 ||x_2 + u_2 - x_{t-1} ||^2\\
    -& \frac{\eta_2}{2} \gamma (1-\gamma)  ||u_1 - u_2||^2\\
    \geq& {\lambda_1} \cdot f_t(\gamma x_1 + (1-\gamma) x_2) + \frac{m \cdot \lambda_1}{2} \gamma(1-\gamma) ||x_1 - x_2||^2 + {\gamma}||\frac{x_1 + u_1 + h_t}{w+1} - \tau_t||^2 \\
    +& {(1-\gamma)}||\frac{x_2 + u_2 + h_t}{w+1} - \tau_t||^2 + \gamma \lambda_2 ||x_1 + u_1 - x_{t-1}||^2 + (1-\gamma) \lambda_2 ||x_2 + u_2 - x_{t-1}||^2\\
    -& \frac{\eta_2}{2} \gamma (1-\gamma) ||u_1 - u_2||^2,
    \end{align*}
    where the above inequality holds since $f_t(.)$ is $m$-strongly convex. By using the definition of $g_t(.)$ we get:
    \begin{align*}
    &\gamma g_t(u_1) + (1-\gamma) g_t(u_2) - \frac{\eta_2}{2} \gamma (1-\gamma) ||u_1 - u_2||^2\\
    \geq&  g_t(\gamma u_1 + (1-\gamma) u_2) -  ||\frac{\gamma (x_1 + u_1) + (1-\gamma) (x_2 + u_2) + h_t}{w+1} - \tau_t||^2\\
    -& \lambda_2 ||\gamma (x_1 + u_1) + (1-\gamma)(x_2 + y_2) - x_{t-1}||^2 + \frac{m \cdot \lambda_1}{2} \gamma (1-\gamma) ||x_1 - x_2||^2\\
    +&  {\gamma}||\frac{x_1 + u_1 + h_t}{w+1} - \tau_t||^2  + {(1-\gamma)}||\frac{x_2 + u_2 + h_t}{w+1} - \tau_t||^2\\
     +&  \gamma \lambda_2 ||x_1 + u_1 - x_{t-1}||^2 + (1-\gamma) {\lambda_2} ||x_2 + u_2 - x_{t-1}||^2 - \frac{\eta_2}{2} \gamma (1-\gamma)  ||u_1 - u_2||^2,\\
     \end{align*}

    Now using the fact that  $\frac{1}{2}||\sqrt{{z_1}}{x} - z_2||^2$ is ${z_1}$-strongly convex we get: 
     \begin{align}
     &\gamma g_t(u_1) + (1-\gamma) g_t(u_2) - \frac{\eta_2}{2} \gamma (1-\gamma) ||u_1 - u_2||^2\\
    \geq& g_t(\gamma u_1 + (1-\gamma) u_2) -   ||\frac{\gamma (x_1 + u_1) + (1-\gamma) (x_2 + u_2) + h_t}{w+1} - \tau_t||^2\notag\\
    -& \lambda_2 ||\gamma (x_1 + u_1) + (1-\gamma)(x_2 + y_2) - x_{t-1}||^2\notag\\
     +&  ||\frac{\gamma (x_1 + u_1) + (1-\gamma) (x_2 + u_2) + h_{t} }{w+1} - \tau_t||^2\notag \\
     +& \frac{1}{(w+1)^2}\gamma (1-\gamma) ||(x_1 -x_2) + (u_1 - u_2)||^2 \notag\\
     +& \lambda_2 ||\gamma(x_1 + u_1) + (1-\gamma) (x_2 + u_2) - x_{t-1}||^2 + \lambda_2 \gamma(1-\gamma) ||x_1 - x_2 + u_1 - u_2||^2\notag\\
    -& \frac{\eta_2}{2} \gamma (1-\gamma) ||u_1 - u_2||^2 + \frac{m \cdot \lambda_1}{2}\gamma(1-\gamma)||x_1 - x_2||^2\notag\\
    =& g_t(\gamma u_1 + (1-\gamma) u_2) + \frac{m \cdot \lambda_1}{2} \gamma (1-\gamma) ||x_1 - x_2||^2 \notag\\
    +& \frac{1}{(w+1)^2}\gamma (1-\gamma) ||(x_1 -x_2) + (u_1 - u_2)||^2 + \lambda_2 \gamma(1-\gamma) ||x_1 - x_2 + u_1 - u_2||^2 \notag\\
    \label{eq:convexity_of_phi_1}
    -& \frac{\eta_2}{2} \gamma (1-\gamma) ||u_1 - u_2||^2.
\end{align}

In addition, we have:
\begin{align}
    &{m \cdot \lambda_1} ||x_1 - x_2||^2 + \frac{2}{(w+1)^2} ||(x_1 -x_2) + (u_1 - u_2) ||^2\notag\\
    +& {2\lambda_2} ||x_1 - x_2 + u_1 - u_2||^2 - \eta_2 ||u_1 - u_2||^2\notag\\
    \geq& (m \cdot \lambda_1 + \frac{2}{(w+1)^2} + 2\lambda_2 ) ||x_1 - x_2||^2 + (\frac{2}{(w+1)^2} + 2\lambda_2  - \eta_2) ||u_1 - u_2||^2\notag\\
    +& 2(\frac{2}{(w+1)^2} + 2\lambda_2 ) (x_1 - x_2)\cdot(u_1 - u_2)\notag\\
    \label{eq:convexity_of_phi_2}
    =& \bigg( \sqrt{\eta} (x_1 - x_2) + \frac{\eta - m \cdot \lambda_1}{\sqrt{\eta}} (u_1 - u_2) \bigg)^2 \geq 0.
\end{align}

Finally inserting Equation~\eqref{eq:convexity_of_phi_2} into~\eqref{eq:convexity_of_phi_1} completes the proof.

\end{proof}

\begin{lemma}[Adaptation of the Cauchy–Schwarz Bound]
\label{lem:sum_chi}
Consider two sequences of actions \( x_{1:T} := [x_1, x_2, ..., x_T] \) and \( y_{1:T} := [y_1, y_2, ..., y_T] \). The following inequality always holds:
\[
\sum_{t=1}^T \left\|\sum_{i=1}^w (y_{t-i} - x_{t-i})\right\|^2 \leq w^2 \sum_{t=1}^T \|y_t - x_t\|^2.
\]
\end{lemma}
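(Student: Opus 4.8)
The statement is a sliding‑window version of the elementary fact that the $\ell_2$ norm of a sum of $w$ vectors is at most $\sqrt{w}$ times the $\ell_2$ norm of the vector of the individual terms (or, squared, a factor $w$). The plan is to denote $\delta_s := y_s - x_s$ for every index $s$ (including $s \le 0$, which by Assumption~\ref{asmp:start_history} all vanish, so only finitely many terms are nonzero), and then bound each summand on the left by Cauchy–Schwarz applied to the inner sum of $w$ terms: for each fixed $t$,
\begin{align*}
    \left\| \sum_{i=1}^w \delta_{t-i} \right\|^2 \;\le\; w \sum_{i=1}^w \|\delta_{t-i}\|^2 ,
\end{align*}
since $\left\|\sum_{i=1}^w v_i\right\|^2 \le w \sum_{i=1}^w \|v_i\|^2$ for any vectors $v_1,\dots,v_w$ (this is Cauchy–Schwarz, or convexity of $\|\cdot\|^2$, with equal weights $1/w$).

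Next I would sum this inequality over $t = 1,\dots,T$ and swap the order of summation. The double sum $\sum_{t=1}^T \sum_{i=1}^w \|\delta_{t-i}\|^2$ equals $\sum_{i=1}^w \sum_{t=1}^T \|\delta_{t-i}\|^2$, and for each fixed $i$ the inner sum is $\sum_{s=1-i}^{T-i} \|\delta_s\|^2 \le \sum_{s \le T} \|\delta_s\|^2$. Because $\delta_s = 0$ for all $s \le 0$, the latter equals $\sum_{s=1}^{T} \|\delta_s\|^2 = \sum_{t=1}^T \|y_t - x_t\|^2$. Hence the double sum is at most $w \sum_{t=1}^T \|y_t - x_t\|^2$, and multiplying by the outer factor $w$ gives the claimed $w^2 \sum_{t=1}^T \|y_t - x_t\|^2$.

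There is essentially no hard step here; the only thing requiring a little care is the bookkeeping on the index ranges after interchanging the sums — specifically, making sure that each shifted copy $\sum_{s=1-i}^{T-i}\|\delta_s\|^2$ is genuinely dominated by $\sum_{t=1}^T \|y_t-x_t\|^2$ rather than overcounting. This is exactly where Assumption~\ref{asmp:start_history} (actions and targets initialized at the origin for $t \le 0$) is used: it lets us extend the range of $s$ downward without introducing extra nonzero terms, so each of the $w$ shifted sums is bounded by the same quantity $\sum_{t=1}^T \|y_t - x_t\|^2$. With that observation in place the proof is a two‑line chain of inequalities.
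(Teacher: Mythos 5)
Your proof is correct and follows the same route as the paper: both apply the Jensen/Cauchy--Schwarz bound $\left\|\sum_{i=1}^w v_i\right\|^2 \le w \sum_{i=1}^w \|v_i\|^2$ to each inner window sum, then sum over $t$ and swap the order of summation. Your explicit treatment of the index ranges after the swap---and the appeal to Assumption~\ref{asmp:start_history} to ensure the $w$ shifted copies of the sum are each dominated by $\sum_{t=1}^T\|y_t-x_t\|^2$---is in fact more careful than the paper's terse ``reorganizing the terms'' step, which leaves that bookkeeping (and the need for the vanishing-boundary convention, without which the lemma as literally stated fails) implicit.
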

\begin{proof}
Expanding the left-hand side:
\[
\sum_{t=1}^T \left\|\sum_{i=1}^w (y_{t-i} - x_{t-i})\right\|^2 = \sum_{t=1}^T w^2 \left\|\sum_{i=1}^w \frac{1}{w}(y_{t-i} - x_{t-i})\right\|^2.
\]
Applying Jensen's inequality to the inner sum, we have:
\[
\left\|\sum_{i=1}^w \frac{1}{w}(y_{t-i} - x_{t-i})\right\|^2 \leq \sum_{i=1}^w \frac{1}{w} \|y_{t-i} - x_{t-i}\|^2.
\]
Substituting this into the original expression:
\[
\sum_{t=1}^T \left\|\sum_{i=1}^w (y_{t-i} - x_{t-i})\right\|^2 \leq \sum_{t=1}^T w^2 \sum_{i=1}^w \frac{1}{w} \|y_{t-i} - x_{t-i}\|^2.
\]
Reorganizing the terms:
\[
\sum_{t=1}^T \left\|\sum_{i=1}^w (y_{t-i} - x_{t-i})\right\|^2 \leq w^2 \sum_{t=1}^T \|y_t - x_t\|^2,
\]

This completes the proof.
\end{proof}

\subsection{Proof of Theorem~\ref{thm:CR_perfectPrediction}}
\label{app:CR_perfectPrediction_proof}
\begin{proof}
    Define $\eta = 2 / (w+1)^2 + m\lambda_1 + 2\lambda_2$ and the function $\mathcal{F}_1(t)$ as:
   
    \begin{equation*}
        \mathcal{F}_1(t) = \frac{\eta}{2} ||x_t - x^*_t||^2.
    \end{equation*}
    where $x^*_t$ represents the action of the optimal offline algorithm at time step $t$. Summing $\mathcal{F}_1(t)$ over all time steps gives:
    
    \begin{align*}
        \sum_{t=1}^T \mathcal{F}_1(t) =& \sum_{t=1}^T \frac{\eta}{2} ||x_t - x^*_t||^2\\
        = & \sum_{t=1}^T \mathcal{F}_1(t-1) + \frac{\eta}{2}  \bigg( ||x_T - x^*_T||^2 - ||x_{0} - x^*_{0}||^2\bigg) = \mathcal{F}_1(T) + \sum_{t=1}^T \mathcal{F}_1(t-1)
    \end{align*}
    which yields:
    \begin{equation}
    \label{eq:perfectPred_positive_g}
        \Rightarrow \sum_{t=1}^T \mathcal{F}_1(t) - \mathcal{F}_1(t-1) = \mathcal{F}_1(T) \geq 0.
    \end{equation}

Here, we used the fact that $x_0 = x^*_0$ from Assumption~\ref{asmp:start_history}. Since $||\frac{x + h_{t}}{w+1} - \tau_{t}||^2 + {\lambda_1} \ f_{t}(x - u_{t}) + \lambda_2 ||x - x_{t-1} ||^2$ is $\eta$-strongly convex with respect to $x$, and $x_t$ is the minimizer, for $w > 0$ we obtain:
    
    \begin{align}
        &\ ||\frac{x_t + h_{t}}{w+1} - \tau_{t}||^2 + {\lambda_1} \ f_{t}(x_t - u_{t}) +  \lambda_2 ||x_t - x_{t-1} ||^2\notag\\
        +\ & \frac{\eta}{2} ||x_t - x^*_t||^2 - \frac{\eta }{2} ||x_{t-1} - x^*_{t-1}||^2 \notag\\
        \leq\ & ||\frac{x^*_t + h_{t}}{w+1} - \tau_{t}||^2 + {\lambda_1} \ f_{t}(x^*_t - u_{t}) +  \lambda_2 ||x^*_t - x_{t-1} ||^2 - \frac{\eta}{2} ||x_{t-1} - x^*_{t-1}||^2\notag\\
        \label{eq:perfectPred_compare1}
        =\ & \bigg(\lambda_1 f_t(x^*_t - u_t)\bigg) + \bigg(||\frac{x^*_t + h_{t}}{w+1} - \tau_{t}||^2  + \lambda_2 ||x^*_t - x_{t-1}||^2 - \frac{\eta}{2} ||x_{t-1} - x^*_{t-1}||^2\bigg),
    \end{align}

For any positive constants $\alpha$ and $\beta$, the latter term is bounded as follows:

\begin{align}
    \ &||\frac{x^*_t + h_{t}}{w+1} - \tau_{t}||^2  + \lambda_2 ||x^*_t - x_{t-1}||^2 - \frac{\eta}{2} ||x_{t-1} - x^*_{t-1}||^2\notag\\
    \leq\ & ||\frac{x^*_t + h^*_{t}}{w+1} - \tau_{t}||^2 + ||\frac{h_t - h_t^*}{w+1}||^2 + 2 ||\frac{x^*_t + h_t^{*}}{w+1} - \tau_t ||\cdot ||\frac{h_t - h_t^*}{w+1}|| \notag\\\
    +\ &   \lambda_2 ||x^*_t - x^*_{t-1}||^2 + 2 \lambda_2 ||x^*_{t} - x^*_{t-1}||\cdot||x_{t-1}- x^*_{t-1}|| + \lambda_2 ||x_{t-1} - x^*_{t-1}||^2 \notag\\\
    -\ & \frac{\eta}{2} ||x_{t-1} - x^*_{t-1}||^2\notag\ \\
    \overset{(a)}{\leq}& ||\frac{x^*_t + h^*_{t}}{w+1} - \tau_{t}||^2 + ||\frac{h_t - h_t^*}{w+1}||^2 + \frac{1}{\beta} ||\frac{x^*_t + h_t^{*}}{w+1} - \tau_t ||^2\notag\  \\
    +\ & {\beta} ||\frac{h_t - h_t^*}{w+1}|| ^2 + \lambda_2 ||x^*_t - x^*_{t-1}||^2 + \frac{\lambda_2^2}{\alpha} ||x^*_t - x^*_{t-1}||^2\notag\\\
    +\ & {\alpha} ||x_{t-1} - x^*_{t-1}||^2 + (\frac{2\lambda_2 -\eta }{2}) ||x_{t-1} - x^*_{t-1}||^2\notag\\\
    \leq\ &  (1+\frac{1}{\beta})||\frac{x^*_t + h^*_{t}}{w+1} - \tau_{t}||^2 + ({1 + \beta}) ||\frac{h_t - h_t^*}{w+1}||^2 \notag\\\
    +\ & {\lambda_2}(1+\frac{\lambda_2}{\alpha}) ||x^*_t - x^*_{t-1}||^2 + (\frac{2\alpha + 2\lambda_2 - \eta }{2}) ||x_{t-1} - x^*_{t-1}||^2,\notag\
\end{align}
where $(a)$ follows from the AM-GM inequality. By summing over all time steps, we have:
\begin{align}
    \ &\sum_{t=1}^T \bigg( ||\frac{x^*_t + h_{t}}{w+1} - \tau_{t}||^2  + {\lambda_2} ||x^*_t - x_{t-1}||^2 - \frac{\eta}{2} ||x_{t-1} - x^*_{t-1}||^2\notag \bigg) \notag\\
    \overset{(b)}{\leq}& (1+\frac{1}{\beta}) \bigg( \sum_{t=1}^T ||\frac{x^*_t + h^*_{t}}{w+1} - \tau_{t}||^2 \bigg) + \frac{w^2(1 + \beta)}{(w+1)^2} \bigg( \sum_{t=1}^T ||x_{t} - x^*_{t}||^2 \bigg) \notag \\
    +\ &  {\lambda_2}(1+\frac{\lambda_2}{\alpha}) \bigg( \sum_{t=1}^T ||x^*_t - x^*_{t-1}||^2 \bigg) + (\frac{2\alpha + 2\lambda_2 - \eta}{2}) \bigg( \sum_{t=1}^T ||x_{t-1} - x^*_{t-1}||^2 \bigg)\notag\\
    \leq\ &  (1+\frac{1}{\beta}) \bigg( \sum_{t=1}^T ||\frac{x^*_t + h^*_{t}}{w+1} - \tau_{t}||^2 \bigg) + {\lambda_2}(1+\frac{\lambda_2}{\alpha}) \bigg( \sum_{t=1}^T ||x^*_t - x^*_{t-1}||^2 \bigg)  \notag \\
    \label{eq:perfectPred_amgm_1}
    +\ &  \bigg(\frac{2\alpha  + 2\lambda_2  + 2(1+\beta) w^2 / (w+1)^2 - \eta}{2}  \bigg) \bigg( \sum_{t=1}^T ||x_{t} - x^*_{t}||^2 \bigg) -  (\frac{2\alpha  + 2\lambda_2 - \eta}{\eta}) \mathcal{F}_1(T),
\end{align}
where $(b)$ uses Lemma \ref{lem:sum_chi}. Substituting this into \eqref{eq:perfectPred_compare1}, we obtain:

\begin{align}
    \ &\sum_{t=1}^T ||\frac{x_t + h_{t}}{w+1} - \tau_{t}||^2 + {\lambda_1} \ f_{t}(x_t - u_{t}) +  {\lambda_2} ||x_t - x_{t-1} ||^2  \notag\\
    \leq \ &  \bigg(\sum_{t=1}^T \lambda_1 f_t(x^*_t - u_t) \bigg) -  \frac{2\alpha  + 2\lambda_2 + \eta - \eta}{\eta} \mathcal{F}_1(T)\notag\\
    +\ &   (1+\frac{1}{\beta}) \bigg( \sum_{t=1}^T ||\frac{x^*_t + h^*_{t}}{w+1} - \tau_{t}||^2 \bigg) + \lambda_2(1+\frac{\lambda_2}{\alpha}) \bigg( \sum_{t=1}^T ||x^*_t - x^*_{t-1}||^2 \bigg) \notag\\
    +\ &  \bigg(\frac{2\alpha + 2\lambda_2 + 2(1+\beta)w^2/(w+1)^2 - \eta }{2} \bigg) \bigg( \sum_{t=1}^T ||x_{t} - x^*_{t}||^2 \bigg) \notag\\
    \leq\ & \max\{1+ \frac{1}{\beta},  1 + \frac{\lambda_2}{\alpha}  \}\ \text{Cost}(OPT, \mathcal{I}) \notag\\
        \label{eq:perfectPred_gen_CR}
    -\ &  \frac{2\alpha  + 2\lambda_2 }{\eta} \mathcal{F}_1(T) + \bigg(\frac{2\alpha + 2\lambda_2  + 2(1+\beta)w^2/(w+1)^2 - \eta}{2} \bigg) \bigg( \sum_{t=1}^T ||x_{t} - x^*_{t}||^2 \bigg).
\end{align}

The additive terms would be non-positive if the following inequality holds:
\begin{equation}
    \label{eq:perfectPred_alpha_range_2}
    2\alpha + 2\lambda_2  + 2(1+\beta)w^2/(w+1)^2 - \eta  \leq 0.
\end{equation}

This implies that if condition $2w^2/(w+1)^2 < m \lambda_1 + 2/(w+1)^2$ holds, the competitive ratio of the adversarial aware algorithm is upper bounded by:



\begin{equation}
    \texttt{CR}(\informedGreedy ) \leq 1 + \frac{2(\lambda_2\ (w+1)^2 + w^2)}{m \lambda_1 (w+1)^2 - 2(w^2-1)}.
\end{equation}

\end{proof}

\subsection{Proof of Theorem~\ref{thm:bestAlg_CR}}
\label{app:bestAlg_CR_proof}
\begin{proof}
We know the adversarial cost function $f_t(.)$ is $m$-strongly convex. The cost function at time step $t$, $\text{Cost}_t(x_t, h_t) =||\frac{x_t + h_t}{w+1} - \tau_t||^2 + {\lambda_1} f_t(x_t - u_t) +  \lambda_2 ||x_t - x_{t-1} ||^2$ is  $\eta$-strongly convex where $\eta$ can be calculated as follows:
\begin{align*}
    \eta =  \frac{2}{(w+1)^2} + m\cdot \lambda_1 + 2\lambda_2.
\end{align*}


Consider the following function:

\begin{align*}
    g_t(u) =& \min_x ||\frac{x_t + \hat{h}_t}{w+1} - \tau_t||^2 + {\lambda_1} f_t(x_t - u) +  \lambda_2 ||x_t - \hat{x}_{t-1} ||^2.
\end{align*}

By the process of selecting $x_t$ by \algName and the fact that function $f_t(.)$ in minimized at the origin, we reach that $u = x_t$ is the minimizer of the $g_t(u)$. From Lemma~\ref{lem:phi_chi_convexity}, $g_t(u)$ is $\eta_2 = m\lambda_1 (1-\frac{m\cdot \lambda_1}{\eta})$-strongly convex. So by the strong convexity of $g_t(.)$ we get:

\begin{align}
    &\ ||\frac{x_t + \hat{h}_t}{w+1} - \tau_t||^2 + \lambda_1 f_t(x_t - x_t) + \lambda_2 ||x_t - \hat{x}_{t-1}||^2 + \frac{\eta_2}{2} ||x_t - u_t||^2 \notag\\
    \label{eq:noisy_convex_phi}
    \leq\ & ||\frac{\hat{x}_t + \hat{h}_t}{w+1} - \tau_t||^2 + \lambda_1 f_t(\hat{x}_t - u_t) + \lambda_2 ||\hat{x}_t - \hat{x}_{t-1}||^2.
\end{align}


Also the function $\mathcal{F}_2(h) = ||\frac{x_t + h}{w+1} - \tau||^2$ is $\frac{2}{(w+1)^2}$-strongly smooth, so for any $0 < \delta_0$ we have:

\begin{align}
    \label{eq:noisy_smoothness_2}
    \frac{1}{(1+\delta_0)} ||\frac{x_t + h_t}{w+1} - \tau_t||^2 \leq  ||\frac{x_t + \hat{h}_t}{w+1} - \tau_t||^2 + \frac{1}{\delta_0 (w+1)^2} ||\hat{h}_t - h_t||^2,
\end{align}

Also, from Proposition~\ref{prop:prev_work_smoothness}, for any $0 < \delta_1$ we have:
\begin{align}
    \label{eq:noisy_smoothness_1}
    \frac{1}{1+\delta_1} f_t(x_t-u_t) \leq f_t(x_t - x_t) + \frac{\ell}{2\delta_1} ||u_t - x_t||^2,
\end{align}

In addition the function $\mathcal{F}_3(x) = \lambda_2||x_t - x||^2$ is $2\lambda_2$-strongly smooth, so for any $0 < \delta_2$ we have:

\begin{equation}
    \label{eq:noisy_smoothness_3}
    \frac{\lambda_2}{(1+\delta_2)}  ||x_t - x_{t-1}||^2 \leq \lambda_2 ||x_t - \hat{x}_{t-1}||^2 + \frac{\lambda_2}{\delta_2} ||\hat{x}_{t-1} - x_{t-1}||^2.
\end{equation}

By replacing \eqref{eq:noisy_smoothness_1}, \eqref{eq:noisy_smoothness_2}, and \eqref{eq:noisy_smoothness_3} into \eqref{eq:noisy_convex_phi}, we get:

\begin{align*}
    \ &\frac{1}{1+\delta_0} ||\frac{x_t + h_t}{q+1} - \tau_t||^2 + \frac{\lambda_1}{1+\delta_1} f_t(x_t-u_t) + \frac{\lambda_2}{1+\delta_2} ||x_t - x_{t-1}||^2\\
    -\ & \frac{1}{\delta_0(w+1)^2} ||\hat{h}_t - h_t||^2 - \frac{\lambda_1 \ell}{2\delta_1} ||u_t - x_t||^2 - \frac{\lambda_2}{\delta_2} ||\hat{x}_{t-1} - x_{t-1}||^2 + \frac{\eta_2}{2} ||u_t - x_t||^2\\
    \leq\ & ||\frac{\hat{x}_t + \hat{h}_t}{w+1} - \tau_t||^2 + \lambda_1 f_t(\hat{x}_t - u_t) + \lambda_2 ||\hat{x}_t - \hat{x}_{t-1}||^2.
\end{align*}

Which gives us:
\begin{align}
    \ &\frac{1}{1+\delta_0} ||\frac{x_t + h_t}{q+1} - \tau_t||^2 + \frac{\lambda_1}{1+\delta_1} f_t(x_t-u_t) + \frac{\lambda_2}{1+\delta_2}  ||x_t - x_{t-1}||^2\notag\\\
    \leq\ & ||\frac{\hat{x}_t + \hat{h}_t}{w+1} - \tau_t||^2 + \lambda_1 f_t(\hat{x}_t - u_t) + \lambda_2 ||\hat{x}_t - \hat{x}_{t-1}||^2\notag\\\
    +\ &  \frac{1}{\delta_0(w+1)^2} ||\hat{h}_t - h_t||^2 + (\frac{\lambda_1 \ell}{2\delta_1} - \frac{\eta_2}{2}) ||u_t - x_t||^2 + \frac{\lambda_2}{\delta_2} ||\hat{x}_{t-1} - x_{t-1}||^2.\notag\
\end{align}

By getting sum over different time slots from both sides and using Lemma~\ref{lem:sum_diff_ys} we get:
\begin{align}
    \ &\sum_{t=1}^T \bigg( \frac{1}{1+\delta_0} ||\frac{x_t + h_t}{q+1} - \tau_t||^2 + \frac{\lambda_1}{1+\delta_1} f_t(x_t-u_t) + \frac{\lambda_2 }{1+\delta_2} ||x_t - x_{t-1}||^2\bigg)\notag\\
    \leq\ & \sum_{t=1}^T \bigg( ||\frac{\hat{x}_t + \hat{h}_t}{w+1} - \tau_t||^2 + \lambda_1 f_t(\hat{x}_t - u_t) + \lambda_2  ||\hat{x}_t - \hat{x}_{t-1}||^2 \bigg) \notag\\
    \label{eq:cost_vs_cost_1}
    +\ &  \frac{1}{\delta_0(w+1)^2} \bigg(\sum_{t=1}^T ||\hat{h}_t - h_t||^2\bigg) + \bigg(\frac{\lambda_1 \ell}{2\delta_1} + \frac{\lambda_2 \lambda_1^2\ell^2}{\delta_2\eta^2} - \frac{\eta_2}{2}\bigg) \bigg(\sum_{t=1}^T||u_t - x_t||^2 \bigg).
\end{align}
where the inequality is derived by applying $\hat{x}_t=x(u_t,\hat{h}_t)$ and $x_t=x(x_t,\hat{h}_t)$ in Lemma~\ref{lem:sum_diff_ys}.
Combining this with Lemma~\ref{lem:sum_chi}, we also obtain:
\begin{align}
    \label{eq:diff_chi_vs_y}
    &\sum_{t=1}^T ||\hat{h}_t - h_t||^2  \leq w^2 \bigg( \sum_{t=1}^T  ||\hat{x}_{t} - x_{t}||^2 \bigg) \leq \frac{w^2\lambda_1^2\ell^2}{\eta^2} \bigg( \sum_{t=1}^T  ||u_t - x_{t}||^2 \bigg).
\end{align}
By replacing \eqref{eq:diff_chi_vs_y} into \eqref{eq:cost_vs_cost_1} we get:


\begin{align}
    &\ \min\{\frac{1}{1+\delta_0} , \frac{1}{1+\delta_1}  , \frac{1}{1+\delta_2} \}\ \text{Cost}(\algName, \mathcal{I}) \notag\\
    \label{eq:parametric_CR}
    \leq\ & \text{Cost}(\informedGreedy, \mathcal{I}) +  (\frac{\lambda_1^2\ell^2}{\delta_0\eta^2} + \frac{\lambda_1 \ell}{2\delta_1} + \frac{\lambda_2\lambda_1^2\ell^2}{\delta_2\eta^2} - \frac{\eta_2}{2}) \bigg(\sum_{t=1}^T ||u_t - x_t||^2 \bigg).
\end{align}

By selecting values for $\delta_0, \delta_1$, and $\delta_2$ as 
\begin{equation*}
    \delta_0 = \delta_1 = \delta_2 = \frac{\lambda_1\ell(\eta^2 + 2\lambda_1 \ell (1+\lambda_2))}{\eta_2 \cdot \eta^2},
\end{equation*}
the degradation factor of \algName will be upper bounded as follows:
\begin{equation*}
    \texttt{DF}(\algName, \informedGreedy) \leq 1 + \frac{\ell (\eta^2 + 2\lambda_1 \ell (1+\lambda_2))}{m\eta (\eta - m\lambda_1)}.
\end{equation*}

\end{proof}

\subsection{Proof of Theorem~\ref{thm:greedy_predicted}}
\label{app:greedy_predicted_lower_proof}

\begin{figure}[ht]
    \centering
    \includegraphics[width=0.7\linewidth]{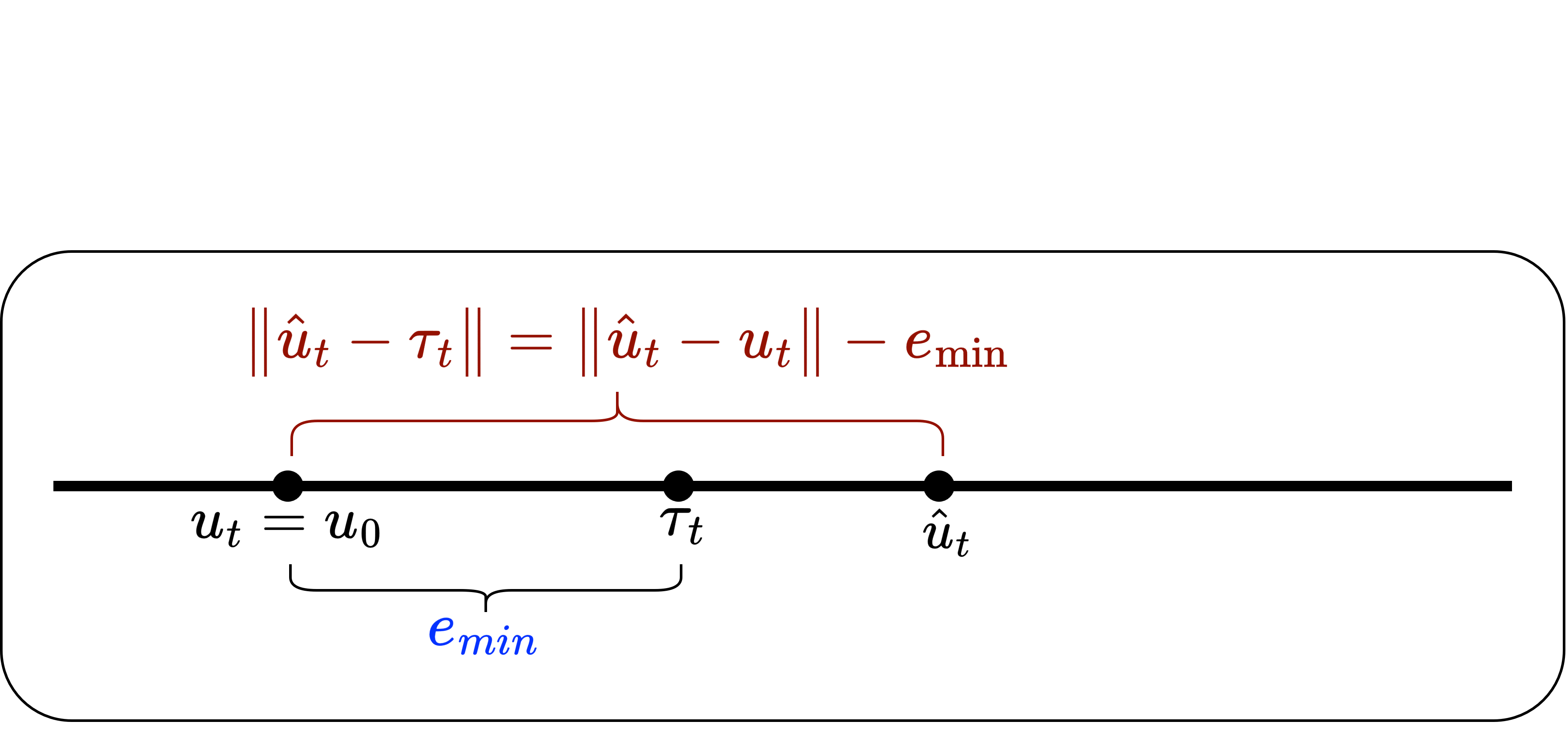}
    \caption{Coordinates of actual and predicted targets used in the proof of Theorem~\ref{thm:greedy_predicted}.}
    \label{fig:PGA_proof_fig}
\end{figure}

\begin{proof}
Let define the error of prediction of adversarial target at time step $t$ as follows:
\begin{equation*}
    e_t := ||u_t - \hat{u}_t||.
\end{equation*}

We prove this theorem by constructing a specific instance of the problem. Consider the target trajectory \( u_t \) and the adversarial target trajectory \( \tau_t \) defined as follows:
\begin{align}
    u_t &= u_0,\\
    \label{eq:greedy_proof_tau}
    \tau_t &= u_0 + e_{\min} \cdot \frac{u_0}{||u_0||},
\end{align}
where $u_0$ is an arbitrary time-independent target, and $e_{\min}$ is constant which satisfies
\begin{equation*}
    e_{\min} \leq \min_t \ e_t.
\end{equation*}

Now, suppose that the predicted value of \( u_t \) satisfies the following condition:
\begin{align}
    \label{eq:greedy_proof_uHat}
    \hat{u}_t = u_t + e_t \cdot \frac{u_0}{||u_0||};
\end{align}
see Figure~\ref{fig:PGA_proof_fig} for an illustration. 

Under this setup, the cost incurred by \informedGreedy is upper-bounded as:
\begin{align}
    \label{eq:informed_greedy_lower_cost}
    \text{Cost}(\informedGreedy, \mathcal{I}_0) \leq \lambda_1 \sum_{t=1}^T f_t (\tau_t - u_t),
\end{align}
where this bound is attained when \informedGreedy selects \( \tau_t \) at every time step.  

On the other hand, the cost incurred by \greedy satisfies the following lower bound:
\begin{align}
\label{eq:greedy_lower}
    \text{Cost}(\greedy, \mathcal{I}_0) \geq \lambda_1 \sum_{t=1}^T f_t (\tilde{x}_t - u_t). 
\end{align}

Given \eqref{eq:greedy_proof_tau} and \eqref{eq:greedy_proof_uHat}, there exists a positive constant \( \alpha_t \) such that, for every time step \( t \), we can express \( \tilde{x}_t \) as:
\begin{align}
    \tilde{x}_t = (1+\alpha_t \lambda_1) \tau_t.
\end{align}
Note that, when $\lambda_1$ gets very small values, $\tilde{x}_t$ converges to $\tau_t$. Substituting this into~\eqref{eq:greedy_lower} gives:
\begin{align}
\label{eq:greedy_lower_proof_1}
    \text{Cost}(\greedy, \mathcal{I}_0) \geq& \lambda_1 \sum_{t=1}^T f_t (\tau_t - u_t) + \frac{\lambda_1 m}{2} \ \sum_{t=1}^T \left\|e_t - e_{\min} \right\|^2.
\end{align}

Substituting \eqref{eq:informed_greedy_lower_cost} into \eqref{eq:greedy_lower_proof_1} gives:

\begin{align}
\label{eq:greedy_lower_proof_2}
    \frac{\text{Cost}(\greedy, \mathcal{I}_0)}{\text{Cost}(\informedGreedy, \mathcal{I}_0)} \geq 1 + \frac{m \sum_{t=1}^T ||e_t - e_{\min}||^2}{2\sum_{t=1}^T f_t(\tau_t - u_t)} = 1 + \frac{m \lambda_1 \sum_{t=1}^T ||e_t - e_{\min}||^2}{2\lambda_1\sum_{t=1}^T f_t(e_{\min} \cdot u_0 / ||u_0||)}.
\end{align}

and limiting $e_{\min} \to 0$ completes the proof.
\end{proof}

\subsection{Proof of Theorem~\ref{thm:learning_augmented_perf}}
\label{app:learning_augmented_proof}
\begin{proof}
We begin by analyzing the performance of \cort under fully adversarial predictions, highlighting the robustness of \cort. Let $\tilde{x}_t$ and $x_t$ denote the actions of \cort and \algName, respectively, at time step $t$.  By Proposition~\ref{prop:prev_work_smoothness} and Lemma~\ref{lem:sum_diff_ys}, for any positive parameter $\delta$, we have
\begin{align}
    \sum_{t=1}^T \left\|\frac{\tilde{x}_t + \tilde{h}_t}{w+1} - \tau_t\right\|^2 
    \leq\,& \sum_{t=1}^T ({1+\delta}) \left\|\frac{x_t + h_t}{w+1} - \tau_t\right\|^2 \notag\\
    +\,& \sum_{t=1}^T \left(1+\frac{1}{\delta}\right)\frac{1}{(w+1)^2} \left\|\tilde{x}_t + \tilde{h}_t - x_t - h_t\right\|^2 \notag\\
    \label{eq:LAGA_term1}
    \leq\,& \sum_{t=1}^T (1+\delta) \left\|\frac{x_t + h_t}{w+1} - \tau_t\right\|^2 
    + \sum_{t=1}^T \left(1+\frac{1}{\delta}\right) \left\|\tilde{x}_t - x_t\right\|^2,
\end{align}
where the last inequality uses Lemma~\ref{lem:sum_chi}. Similarly, for the regularization term, we have
\begin{align}
    \sum_{t=1}^T \lambda_2 \left\|\tilde{x}_t - \tilde{x}_{t-1}\right\|^2 
    \leq\,& \sum_{t=1}^T (1+\delta) \lambda_2 \left\|x_t - x_{t-1}\right\|^2 
    + \sum_{t=1}^T \left(1+\frac{1}{\delta}\right) \lambda_2 \left\|\tilde{x}_t - \tilde{x}_{t-1} - (x_t - x_{t-1})\right\|^2 \notag\\
    \label{eq:LAGA_term3}
    \leq\,& (1+\delta) \sum_{t=1}^T \lambda_2 \left\|x_t - x_{t-1}\right\|^2 
    + 4(1+\frac{1}{\delta}) \lambda_2 \sum_{t=1}^T \left\|\tilde{x}_t - x_t\right\|^2.
\end{align}

Since $f_t(\cdot)$ is $\ell$-strongly smooth, we have
\begin{align}
\label{eq:LAGA_term2}
    \lambda_1 f_t(\tilde{x}_t - u_t) 
    \leq\, & \lambda_1 f_t(x_t - u_t) + \frac{\ell \lambda_1}{2} \left\|\tilde{x}_t - x_t\right\|^2 
    + \lambda_1 \nabla f_t(x_t - u_t) \cdot (\tilde{x}_t - x_t).
\end{align}

By combining \eqref{eq:LAGA_term1}, \eqref{eq:LAGA_term3}, and \eqref{eq:LAGA_term2}, for any instance input $\mathcal{I}$, we obtain
\begin{align*}
    \text{Cost}(\cort, \mathcal{I}) 
    \leq\, & (1+\delta)\ \text{Cost}(\algName, \mathcal{I}) 
    + \left(1+\frac{1}{\delta}\right)\left(1+4\lambda_2+\frac{\ell \lambda_1}{2}\right) \sum_{t=1}^T \left\|\tilde{x}_t - x_t\right\|^2 \notag \\
    +\,& \lambda_1 \sum_{t=1}^T \nabla f_t(x_t - u_t) \cdot (\tilde{x}_t - x_t).
\end{align*}

Moreover, since $f_t(\cdot)$ is $\ell$-strongly smooth, it follows that
\begin{align}
    \left\|\nabla f_t(x_t - u_t)\right\| \leq \ell \left\|x_t - u_t\right\|,
\end{align}
which implies
\begin{align}
    \text{Cost}(\cort, \mathcal{I}) 
    \leq\, & (1+\delta)\ \text{Cost}(\algName, \mathcal{I}) 
    + \left(1+\frac{1}{\delta}\right)\left(1+4\lambda_2+\frac{\ell \lambda_1}{2}\right) \sum_{t=1}^T \left\|\tilde{x}_t - x_t\right\|^2 \notag\\
    +\,& \lambda_1 \ell \sum_{t=1}^T \left\|x_t - u_t\right\| \cdot \left\|\tilde{x}_t - x_t\right\| \notag\\
    \leq\, & (1+\delta)\ \text{Cost}(\algName, \mathcal{I}) 
    + \left(1+\frac{1}{\delta}\right)\left(1+4\lambda_2+\frac{\ell \lambda_1}{2}\right) \sum_{t=1}^T \left\|\tilde{x}_t - x_t\right\|^2 \notag\\
    \label{eq:upper_cost_laga}
    +\,& \lambda_1 \ell \sum_{t=1}^T \left[\frac{1}{\alpha} \left\|x_t - u_t\right\|^2 + \alpha \left\|\tilde{x}_t - x_t\right\|^2\right],
\end{align}
where $\alpha$ is an arbitrary positive constant. In addition, for \algName we have:
\begin{align}
    \label{eq:best_lower_vs_xu}
    \text{Cost}(\algName, \mathcal{I}) \geq \lambda_1 \sum_{t=1}^T f_t(x_t - u_t) \geq \frac{m\lambda_1}{2} \sum_{t=1}^T ||x_t - u_t||^2.
\end{align}

Also, based on Lemma~\ref{lem:sum_diff_ys}, we have
\begin{align}
    \sum_{t=1}^T \left\|\tilde{x}_t - x_t\right\|^2 
    \leq\, & (\frac{\lambda_1 \ell}{\eta})^2 \sum_{t=1}^T \left\|x_t - \tilde{u}_t\right\|^2 
    \leq (\frac{\lambda_1 \ell\theta}{\eta})^2\sum_{t=1}^T D_t^2 \notag \\
    \label{eq:diff_laga_best}
    \leq\, & (\frac{\lambda_1 \ell \theta}{\eta})^2  \sum_{t=1}^T \left\|x_t - u_t\right\|^2 \leq \frac{2\lambda_1 \ell^2}{m \eta^2}\ \theta^2\ \text{Cost}(\algName, \mathcal{I}),
\end{align}

By combining~\eqref{eq:diff_laga_best}, \eqref{eq:best_lower_vs_xu}, and~\eqref{eq:upper_cost_laga}, we obtain
\begin{align}
    \text{Cost}(\cort, \mathcal{I}) 
    \leq\; & \bigg[1+\delta + \frac{\lambda_1}{\alpha} + \bigg( (1+\frac{1}{\delta})(1+4\lambda_2 + \frac{\ell \lambda_1}{2}) + \alpha \lambda_1 \ell (\frac{\lambda_1 \ell}{\eta})^2 \bigg) \theta^2 \bigg] \text{Cost}(\algName, \mathcal{I}),
\end{align}

Moreover, as $\theta \to 0$, \cort converges to \algName. This implies
\begin{align}
    \label{eq:laga_robustness}
    \texttt{DF}(\cort, \informedGreedy) 
    \leq \texttt{DF}(\algName, \informedGreedy)\left(1 + \theta^2\mathcal{O}(1)\right).
\end{align}

Next, we proceed to analyze the performance of \cort under perfect prediction (Consistency analysis). 

Let $x_t$ denote the action of \informedGreedy at time step $t$. Under perfect prediction conditions, we have $\hat{u}_t = \tilde{u}_t$. In such a case, if $\left\|u_t - x_t\right\| \leq D_t$, the actions of \informedGreedy and \cort coincide.  Thus, in order to maximize the gap between the performance of \cort and \informedGreedy, in the worst case scenario, $\mathcal{I}_{worst}$, an adversary must select targets such that the following inequality holds:
\begin{align}
    D_t \leq \left\|u_t - x_t\right\|.
\end{align}

Based on Assumption~\ref{asmp:start_history}, $u_0$ and $x_0$ are identical initially, implying $D_1 = 0$. Combining this with the above inequality, we conclude that, in the worst-case scenario, the following relation holds:
\begin{align}
\label{eq:D_t}
    D_t = \left\|u_{t-1} - x_{t-1}\right\| \leq \left\|u_t - x_t\right\| = D_{t+1}.
\end{align}

Furthermore, by the definitions of $x_t$, $\tilde{x}_t$, $\hat{x}_t$, and $u_t$, these points lie along a direct line segment. Consequently, there exist constants $\beta_t \in [0,1]$ such that
\begin{align}
    \tilde{x}_t =\ & \beta_t \hat{x}_t + (1-\beta_t) x_t.
\end{align}

Based on this, using the convexity of cost terms we get:

\begin{align}
    \text{Cost}_t(\cort) =\ &  ||\frac{\tilde{x}_t + \tilde{h}_t}{w+1} - \tau_t||^2 + \lambda_1 f_t(\tilde{x}_t - u_t) + \lambda_2 ||\tilde{x}_t - \tilde{x}\notag_{t-1}||^2\\
    \leq\ & {\beta} ||\frac{\hat{x}_t + \hat{h}_t}{w+1} - \tau_t||^2 + ({1-\beta_t}) ||\frac{{x}_t + {h}_t}{w+1} - \tau_t||^2  \notag \\
    +\ & \beta_t \lambda_1 f_t(\hat{x}_t - u_t) + (1-\beta_t) \lambda_1 f_t(x_t - u_t) \notag \\
    +\ & {\lambda_2 \ \beta_t} ||\hat{x}_t - \tilde{x}_{t-1}||^2 + {\lambda_2  (1-\beta_t)} ||x_t - \tilde{x}_{t-1}||^2 \notag \\
    \leq\ & {\beta_t} ||\frac{\hat{x}_t + \hat{h}_t}{w+1} - \tau_t||^2 + ({1-\beta_t}) ||\frac{{x}_t + {h}_t}{w+1} - \tau_t||^2  \notag \\
    +\ & \beta_t \lambda_1 f_t(\hat{x}_t - u_t) + (1-\beta_t) \lambda_1 f_t(x_t - u_t) \notag \\
    +\ & {\lambda_2 \ \beta_t} ||\hat{x}_t - \hat{x}_{t-1}||^2 + \lambda_2  (1-\beta_{t}) ||x_t - x_{t-1}||^2 \notag\\
    \label{eq:cost_laga_robustness_1}
    +\ & {\lambda_2  \beta_t (\frac{\lambda_1 \ell}{\eta})^2} (D_t - \theta D_{t-1})^2 + {\lambda_2  (1-\beta_t) } (\frac{\lambda_1 \ell}{\eta})^2 \theta^2 D_t^2,
\end{align}
where the last inequality holds only for the worst-case instance $\mathcal{I}_{\text{worst}}$, using the fact that, by definition, the following property holds for $\mathcal{I}_{\text{worst}}$:
\begin{align}
    \label{eq:consistency_D_t_dynamic}
    \theta D_t\ \leq\ & D_{t+1}, \quad \forall t\\
    ||\tilde{u}_t - x_t||\ =\ & \theta D_t,\quad \forall t\\
    ||u_t - \tilde{u}_t||\ =\ &  D_{t+1} - \theta D_{t},\quad \forall t\\
    ||u_t - x_t||\ =\ &  D_{t+1}.\quad \forall t
\end{align}



This yields:
\begin{align}
    \text{Cost}_t(\cort, \mathcal{I}_{worst}) \leq\ & \beta_t\ \text{Cost}_t(\informedGreedy, \mathcal{I}_{worst})\notag\\
    +\ & (1-\beta_t)\ \text{Cost}_t(\algName, \mathcal{I}_{worst}) \notag \\
    \label{eq:cost_laga_consistency_1}
    +\ & \lambda_2 (\frac{\lambda_1 \ell}{ \eta})^2\bigg[ D_t^2 \bigg(\beta_t(1-\frac{\theta D_{t-1}}{D_t})^2 + (1-\beta_t)\theta^2\bigg) \bigg].
\end{align}

In addition, we can provide upper bounds on the values of $\beta_t$ and $1 - \beta_t$ as follows:
\begin{align}
    \label{eq:beta_bound}
    \beta_t &= \frac{\|\tilde{x}_t - x_t\|}{\|\hat{x}_t - x_t\|} 
    \leq \frac{\lambda_1 \ell}{\eta} \cdot \frac{\eta}{m \lambda_1} \cdot \frac{\theta D_t}{D_{t+1}} 
    = \frac{\ell}{m} \cdot \frac{\theta D_t}{D_{t+1}}, \\
    \label{eq:neg_beta_bound}
    1 - \beta_t &= \frac{\|\tilde{x}_t - \hat{x}_t\|}{\|\hat{x}_t - x_t\|} 
    \leq \frac{\ell}{m} \cdot \left( \frac{D_{t+1} - \theta D_t}{D_{t+1}} \right) = \frac{\ell}{m} (1- \frac{\theta D_t}{D_{t+1}}),
\end{align}
where we used the convexity of the cost function and Lemma~\ref{lem:sum_diff_ys} to derive these bounds. These expressions reveal that when $\theta$ is small (i.e., $\theta \to 0$), $\beta_t$ also becomes small, indicating that the action of \cort\ closely follows that of \algName. Conversely, as $\theta$ grows large (i.e., $\theta \to \infty$), $\beta_t$ approaches 1, and the action of \cort\ becomes similar to that of \informedGreedy.

Also, since \algName is minimizing the cost value ignoring the adversarial cost at time step $t$, the cost of \informedGreedy in the worst case instance is lower bounded as follows:
\begin{align}
    \text{Cost}(\informedGreedy, \mathcal{I}_{worst}) \geq\ & \sum_{t=1}^T \bigg( ||\frac{x_t + \hat{h}_t}{w+1} - \tau_t||^2 + \lambda_2 ||x_t - \hat{x}_{t-1}||^2 \bigg) \notag \\
    +\ & (\frac{\eta-\lambda_1 m}{2}) (\frac{m\lambda_1}{\eta})^2 \sum_{t=1}^T||x_t - u_t||^2 +  \frac{m\lambda_1}{2} (\frac{\eta-m\lambda_1}{\eta})^2 \sum_{t=1}^T||x_t - u_t||^2 \notag \\
    \geq\ & \sum_{t=1}^T \bigg( ||\frac{x_t + \hat{h}_t}{w+1} - \tau_t||^2 + \lambda_2 ||x_t - \hat{x}_{t-1}||^2 \bigg) \notag \\
    +\ & \frac{m\lambda_1}{2\eta}(\eta-m\lambda_1) \sum_{t=1}^T ||x_t - u_t||^2 \notag\\
    \geq\ & \frac{m\lambda_1}{2\eta}(\eta-m\lambda_1) \sum_{t=1}^T D_{t+1}^2 \notag\\
    \label{eq:cost_IGA_consistency_1}
    =\ & \frac{m\lambda_1}{2\eta}(\eta-m\lambda_1) \sum_{t=1}^{T+1} D_{t}^2,
\end{align}
where in the last inequality we used the fact that $D_1$ = 0. Combining \eqref{eq:cost_laga_consistency_1} and \eqref{eq:cost_IGA_consistency_1} yields:


\begin{align}
    \frac{\text{Cost}_t(\cort, \mathcal{I}_{worst})}{\text{Cost}_t(\informedGreedy, \mathcal{I}_{worst})} \leq\ & \beta_t + (1-\beta_t) \frac{\text{Cost}_t(\algName, \mathcal{I}_{worst})}{\text{Cost}_t(\informedGreedy, \mathcal{I}_{worst})}\notag\\
    +\ & \frac{2\lambda_2  \lambda_1\ell^2 \bigg[\sum_{t=1}^T D_t^2 \bigg(\beta_t(1-\frac{\theta D_{t-1}}{D_t})^2 + (1-\beta_t)\theta^2\bigg) \bigg]}{m\eta (\eta- m\lambda_1) \sum_{t=1}^{T+1} D_{t}^2},
\end{align}

Note that the latter term increases with $\theta$, and both the numerator and denominator grow at most quadratically with respect to $\theta$. Its maximum value, as $\theta \to \infty$, is bounded by:

\begin{align}
    \left[\frac{2\lambda_2  \lambda_1\ell^2 \bigg[\sum_{t=1}^T D_t^2 \bigg(\beta_t(1-\frac{\theta D_{t-1}}{D_t})^2 + (1-\beta_t)\theta^2\bigg) \bigg]}{m\eta (\eta- m\lambda_1) \sum_{t=1}^{T+1} D_{t}^2} \bigg | \theta\to \infty \right]\notag \\
    \leq\ \frac{2\lambda_2  \lambda_1\ell^2 \sum_{t=1}^T D_t^2 \theta^2 }{m\eta (\eta- m\lambda_1) \sum_{t=1}^{T+1} D_{t}^2} \leq \frac{2\lambda_2  \lambda_1\ell^2 }{m\eta (\eta- m\lambda_1)}.
\end{align}

Finally, the proof follows by noting that $\beta_t$ increases with $\theta$, converging to $0$ as $\theta \to 0$, and approaching $1$ as $\theta \to \infty$.

\end{proof}

\section{Additional Details of Experiments}
\label{app:experimental_detail}
In this section, we provide additional details of the experimental setup.

\subsection{Result of Experiments on Impact of $\lambda_2$, and $\tau_t$}

Figure~\ref{fig:elastic_l2} illustrates the impact of the switching cost coefficient $\lambda_2$ on the total cost incurred by the algorithms. The results show that $\lambda_2$ influences the cost functions in a manner similar to the weight parameter $w$. As $\lambda_2$ increases, both online algorithms and the offline optimal algorithm are more heavily penalized for making large changes between consecutive actions. This discourages frequent switching, leading to smoother action sequences. Consequently, the adversarial cost component contributes less to the overall cost, resulting in reduced total cost values.

We also evaluate the impact of the daily average value of $\tau_t$ on algorithm performance. Following the structure described in Section~\ref{sec:experiments}, we vary $\tau_t$ across the day by modifying its value during mid-peak periods and then shifting it by $+0.1$ (i.e., 10\%) during off-peak and $-0.1$ (i.e., 10\%) during on-peak hours. This setup ensures that the daily average of $\tau_t$ matches its value during mid-peak periods. Results of this analysis, shown in Figure~\ref{fig:elastic_tau}, indicate that the effect of the daily average of $\tau_t$ on the normalized cost is modest compared to other parameters like $\lambda_1$, $\lambda_2$, and $w$. This limited sensitivity is intuitive, as we preserve the shape of the $\tau_t$ variation pattern throughout the day and only apply a uniform shift. Note that this analysis focuses solely on the impact of daily average $\tau_t$ on algorithm cost; exploring its influence on other metrics—such as the average allocation to elastic or inelastic workloads—is left for future work.

\begin{figure}[t]
  \centering
  \begin{subfigure}[t]{0.48\linewidth}
    \centering
    \includegraphics[width=\linewidth]{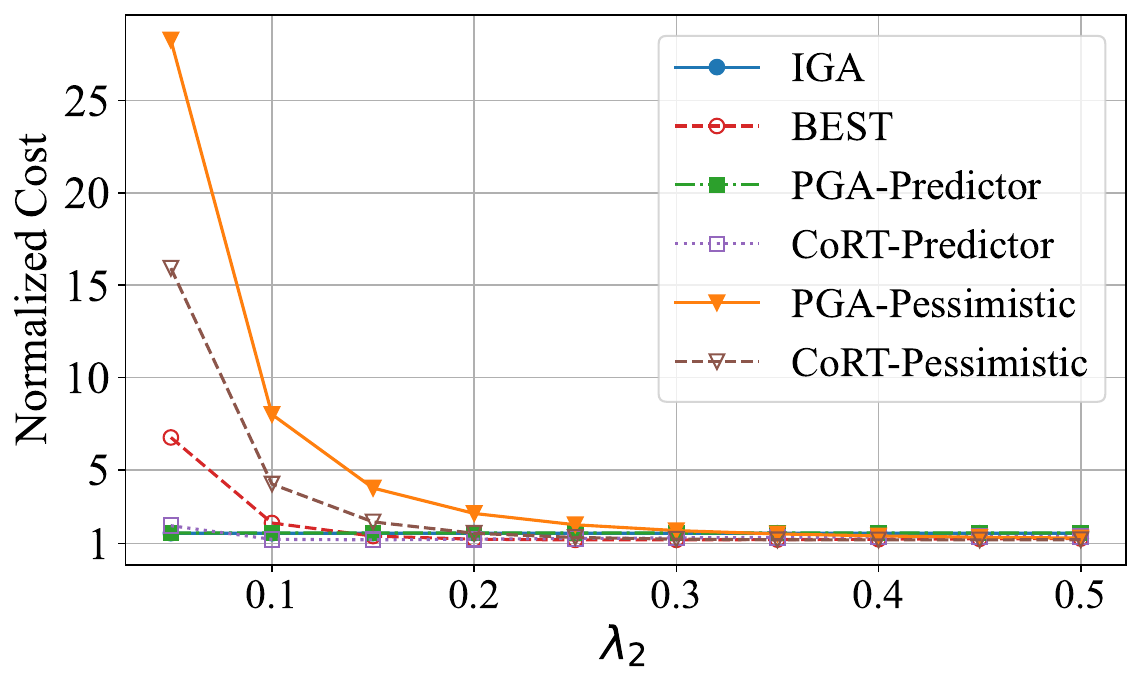}
    \caption{ Impact of parameter $\lambda_2$}
    \label{fig:elastic_l2}
  \end{subfigure}
  \hfill
  \begin{subfigure}[t]{0.48\linewidth}
    \centering
    \includegraphics[width=\linewidth]{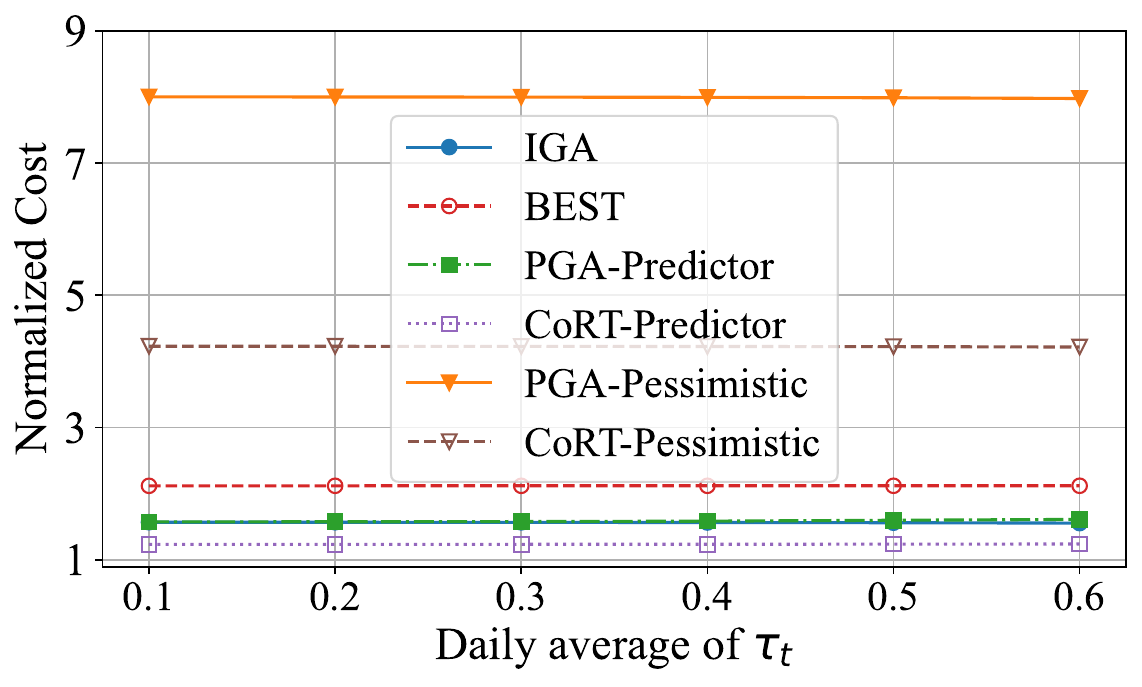}
    \caption{Impact of daily average of trajectory targets}
    \label{fig:elastic_tau}
  \end{subfigure}
  \vspace{-0.5em}
    \caption{Impact of $\lambda_2$ and the daily average of trajectory targets, $\tau_t$, on algorithm cost. While $\lambda_2$ significantly affects the normalized cost of the algorithms, the daily average of $\tau_t$ has a minimal impact.}

  \label{fig:elastic_fig2}
\end{figure}

\subsection{More Detail on the LSTM Predictor Used in Section~\ref{sec:experiments}}

\begin{figure}[!h]
    \centering
    \includegraphics[width=0.75\linewidth]{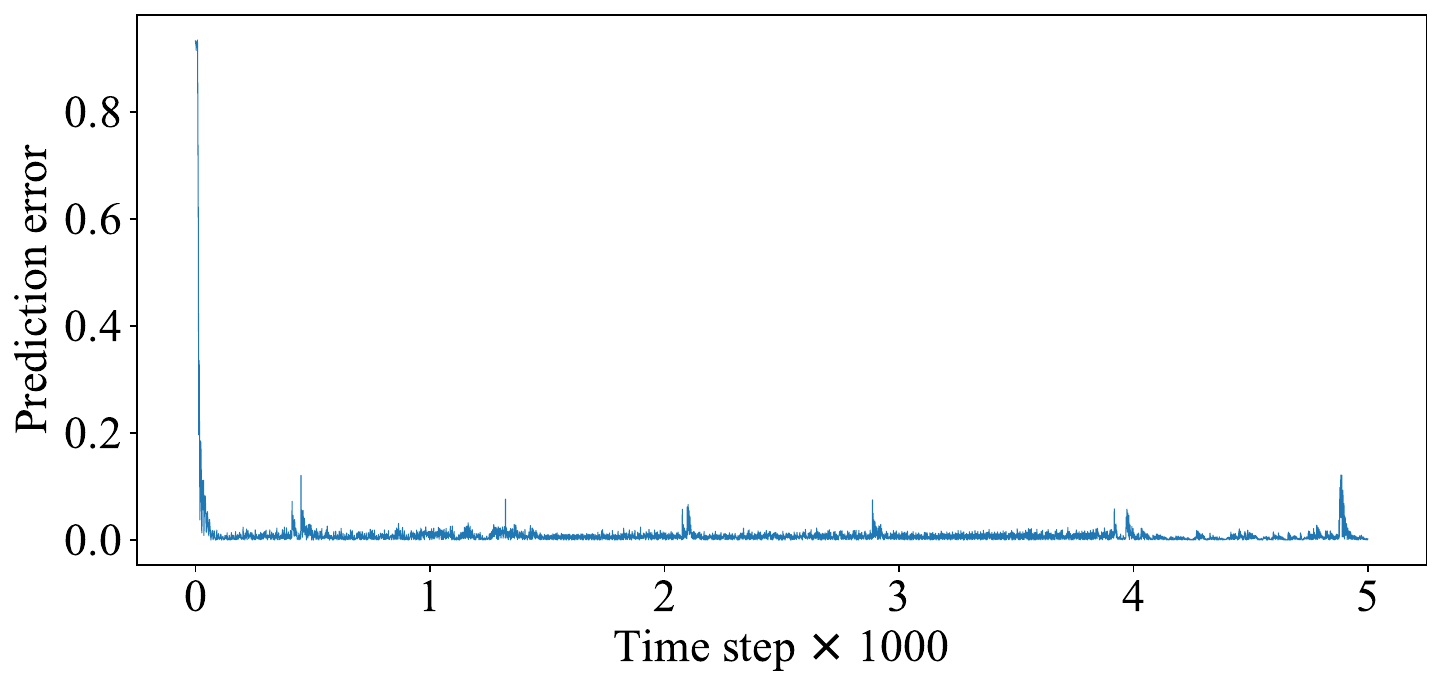}
    \caption{Prediction error $\|u_t - \hat{u}_t\|$ over time steps.}
    \label{fig:LSTM_error}
\end{figure}

To estimate the adversary’s target $u_t$ at each time step in an online fashion, we implement an LSTM-based regression model that learns the temporal dependencies in the observed sequence of $u$ values. Specifically, we train a one-layer Long Short-Term Memory (LSTM) network followed by a fully connected linear layer. The LSTM model receives a sliding window of the previous $W$ observations $\{u_{t-W}, \ldots, u_{t-1}\}$ and predicts the next value $\hat{u}_t$.

Our architecture consists of:

\textbf{Input layer:} A sequence of $W=10$ scalar values, each representing the observed $u_t$ at previous time steps. 

\noindent \textbf{LSTM layer:} A single-layer LSTM with hidden size 32, which processes the input sequence and outputs a hidden state vector representing the temporal features of the sequence.
 
 \noindent \textbf{Output layer:} A linear layer of size $32 \rightarrow 1$ that maps the last hidden state to the final prediction $\hat{u}_t$.

We train the model incrementally in an online manner, using a single gradient update per time step. The model is optimized using the Adam optimizer with a learning rate of $10^{-2}$. The training is performed in real-time as new data arrives, making the approach suitable for dynamic and non-stationary environments.

Figure~\ref{fig:LSTM_error} shows the prediction error ($\|u_t - \hat{u}_t\|$) over time for the first 5,000 steps. The results demonstrate that the LSTM network achieves a high level of accuracy in predicting $u_t$. Specifically, the average prediction error across the entire horizon is $0.01$, with a standard deviation of $0.04$. Owing to this high accuracy, the performance of \greedy-Predictor\ and \cort-Predictor\ closely matches that of \greedy-Optimistic\ and \cort-Optimistic\ reported in Section~\ref{sec:experiments}.








\end{document}